\theoremstyle{plain}
\begin{document}

\title{Approximate information maximization for bandit games}

%

\author{%
  Alex Barbier--Chebbah \\
  Institut Pasteur, Université Paris Cité\\
  Epimethee INRIA Paris France\\
  CNRS UMR 3571, Paris France \\
  \texttt{alex.barbier-chebbah@pasteur.fr} \\
  \And
  Christian L. Vestergaard \\
  Institut Pasteur, Université Paris Cité\\
  Epimethee INRIA Paris France\\
  CNRS UMR 3571, Paris France \\
  \AND
  Jean-Baptiste Masson \\
  Institut Pasteur, Université Paris Cité\\
  Epimethee INRIA Paris France\\
  CNRS UMR 3571, Paris France \\
  \And
  Etienne Boursier \\
  INRIA, Université Paris Saclay\\ 
  LMO, Orsay, France\\
}

\maketitle

\setcounter{tocdepth}{3}

\doparttoc 

\faketableofcontents 

\begin{abstract}
Entropy maximization and free energy minimization are general physics principles for modeling dynamic systems. Notable examples include modeling decision-making within the brain using the free-energy principle, optimizing the accuracy-complexity trade-off when accessing hidden variables with the information bottleneck principle \citep{tishbyInformationBottleneckMethod2000a}, and navigation in random environments using information maximization \citep{vergassolaInfotaxisStrategySearching2007}. Building on this principle, we propose a new class of bandit algorithms that maximize an approximation to the information of a key variable within the system. To this end, we develop an approximated, analytical physics-based representation of the entropy to forecast the information gain of each action and greedily choose the one with the largest information gain. This method yields strong performances in classical bandit settings. Motivated by its empirical success, we prove its asymptotic optimality for the multi-armed bandit problem with Gaussian rewards. 
Since it encompasses the system's properties in a single, global functional, this approach can be efficiently adapted to more complex bandit settings. This calls for further investigation of information maximization approaches for bandit problems. 
\end{abstract}

\section{Introduction}

Multi-armed bandit problems have attracted wide attention in the past decades. They embody the challenge of balancing exploration and exploitation and have been applied to various settings such as online recommendation \citep{bresler2014latent}, medical trials \citep{thompson1933likelihood}, dynamic pricing \citep{boer2015dynamic}, and reinforcement learning-based decision making \citep{silverMasteringGameGo2016,ryzhovKnowledgeGradientAlgorithm2012}. 
Besides the classic stochastic version of the multi-armed bandit problem, many subsequent extensions have been developed, providing richer models for specific applications. These extensions include linear bandits \citep{li2010contextual}, many-armed bandits \citep{bayati2020unreasonable}, and pure exploration problems such as thresholding bandits \citep{locatelliOptimalAlgorithmThresholding2016} or top-K bandits \citep{kalyanakrishnanPACSubsetSelection2012,kaufmannComplexityBestarmIdentification2016}.

In the classic setting, an agent chooses an arm at each time step and observes a stochastic reward. Since they only observe the payoff of the chosen arm, the agent should regularly explore suboptimal arms. This is often referred to as the exploration-exploitation trade-off. An agent can exploit its current knowledge to optimize gains by drawing the current empirically best arm or exploring other arms to potentially increase future gains.

Optimal strategies are characterized asymptotically by the Lai and Robbins bound \citep{lai1985asymptotically}. Among them, upper confidence bound  \citep[UCB,][]{auerUsingUpperConfidence2000, garivier_kl-ucb_2011} methods greedily pull the arm maximizing some tuned confidence index; Thompson sampling \citep{kaufmann2012thompson,agrawalThompsonSamplingContextual2013} relies on sampling mean rewards from a posterior distribution and chooses the arm with the largest random sample; deterministic minimum empirical divergence \citep[DMED,][]{hondaAsymptoticallyOptimalBandit2010a} builds on a balance between the maximum likelihood of an arm being the best and the posterior expectation of the regret. 

Even if these approaches efficiently utilize current available information, they do not aim directly to acquire more information. 
We highlight, however, the information directed sampling approach (IDS) of \citet{russoLearningOptimizeInformationDirected2014}, which relies on a measure of the information gain of the optimal actions. By leveraging an information measure that consistently captures the specific problem structure, IDS can address general classes of problems, particularly those with a complex information structure where classic bandit methods fall short. Surprisingly, IDS can even outperform UCB and Thompson sampling in classic bandit problems. However, like DMED, this method explicitly balances information gain with expected losses induced by exploration, and the efficiency of pure information-maximizing strategies thus remains to be proven.

Information-maximization approaches provide a decision-making strategy in which the agent tries to maximize information about one or more relevant stochastic variables. 
The information-maximizing principle has shown to be efficient in a broad range of domains \citep{heliasStatisticalFieldTheory2020a,parrActiveInferenceFree2022,hernandez-lobatoPredictiveEntropySearch2015,vergassolaInfotaxisStrategySearching2007} where decisions have to be taken in fluctuating or unknown environments. These domains include, e.g., robotics applications \citep{zhangMultiRobotSearchingSparse2015c}, where the ability to share approximate information improves collective decisions, and the search for olfactory sources in turbulent flows \citep{massonOlfactorySearchesLimited2013b,reddyOlfactorySensingNavigation2022}.

In the specific setting of bandit problems, information maximization has
 shown promising empirical results, and heuristic arguments support its asymptotic optimality \citep{reddyInfomaxStrategiesOptimal2016a, barbier-chebbahApproximateInformationEfficient2023}.  As IDS, they leverage information structure to provide a versatile decision framework with the capability to address various bandits settings. However, the efficiency of such a ``pure exploration'' strategy in terms of regret minimization has yet to be proven, and it has been previously argued that it would result in a linear regret \citep{russoLearningOptimizeInformationDirected2014}. 
Moreover, current information-based algorithms often rely on complex numerical integration, leading to high computational costs, a significant challenge that information-based methods must overcome.
In this context, we aim to leverage new strategies derived from information maximization principles focusing on global observables, \textit{i.e.} variables depending on more than one bandit arm, that alleviate the computational burden of numerical evaluation of complex functionals and to rigorously prove their efficiency.

\paragraph{Contributions.} 
Our main contribution is introducing a new class of asymptotically optimal algorithms that rely on approximations of a functional representing the current information of interest about the whole bandit system. This approach is based on the entropy of the posterior mean value of the best arm, for which we provide an approximate expression to enable robust, easily tunable, and extendable algorithms with a direct analytical formulation. We focus here on the multi-armed bandit problem with Gaussian rewards, for which we derive a simple approximate information maximization algorithm (\algonamese) and provide an upper bound on its pseudo-regret, ensuring that AIM is asymptotically optimal.
The information from each arm is incorporated in a unique entropy functional, which shows promise for tackling more complex bandit settings such as linear bandit or many-armed bandits. 
Thus, our main motivation is to design an analytical functional-based algorithmic principle, which can potentially address problems with more correlated information structures in the future. 
Additionally, another strength of \algoname lies in its short-time behavior, where it shows strong performances as we illustrate numerically for both Bernoulli and Gaussian rewards.

\paragraph{Organization.} In \cref{sec:setting}, we briefly review the K-armed bandit setting. \cref{sec:strategies} presents the general principle of information maximization approaches, originally inspired by both the information bottleneck principle and navigation in turbulent plumes. \cref{sec:results} upper bounds the regret of \algoname[,] showing it attains Lai and Robbin's asymptotic bound. In \cref{sec:expe}, the performance of \algoname is numerically compared with known baselines on multiple examples. Finally, \cref{sec:extensions} discusses extensions of \algoname to various bandit settings.

\section{Setting}\label{sec:setting}

We consider the classic $K$-armed stochastic bandit game. In each round $t$, the agent selects an arm $\Armt \in [K] = \{1, \ldots, K\}$ among a set of $K$ choices solely based on the rewards of the previously pulled arms. The chosen arm $k$ then returns a stochastic reward $\Rewardt(k)$, drawn independently of the previous rounds, according to a distribution $\nu_k$ of mean $\marmi{k}$. We denote by $\N_k(t)$  the number of times the arm~$k$ has been pulled. When clear from context, we omit the dependence on $t$ for simplicity.

The goal of the agent is to maximize its cumulative reward, or equivalently, to minimize its pseudo-regret up to round $T$, defined as
\begin{equation}\label{regretdef}
    \RegretT = \barm T - \sum_{t=1}^T  \Esp[ \marmi{\Armt}],
\end{equation}
where $\barm = \max_{i\in[K]}\marmi{i}$. Hence, the agent will optimize its choice of $\Armt$ relying on the previous observations up to~$t$. For a large family of reward distributions, the asymptotic pseudo-regret is lower-bounded for any uniformly good policy by
\begin{equation}\label{LaiRobbin}
\liminf_{T \to \infty} \frac{ \RegretT}{ \ln(T)} \geq  \sum_{k, \mu_k<\barm} \frac{\barm-\mu_{k}}{D_{\mathrm{KL}}(\nu_{k}\|\nu_{k^*})},
\end{equation}%
where $k^*\in\argmax_{i\in[K]} \mu_i,$, and $D_{\mathrm{KL}}(\nu_{k}\|\nu_{k^*}) $ denotes the Kullback-Leibler divergence between the reward distributions of the arms $k$ and $k^*$ \citep{lai1985asymptotically}. In the particular case of Gaussian rewards with equal variances, i.e., $\nu_i = \mathcal{N}(\mu_i,\varp)$, the Kullback-Leibler divergence is $D_{\mathrm{KL}}(\nu_{k}\|\nu_{k^*}) = (\mu^*-\mu_k)^2/(2\varp)$.

\section{Information maximization strategies}\label{sec:strategies}

Here, we introduce entropy-based, information maximization strategies and their underlying physical principles. We then detail approximations leading to an analytical and simplified entropy functional, which is the basis of the \algoname algorithm.

\subsection{Algorithm design principle: physical intuition}

We aim to design a functional encompassing the current available information of the full system. Inspired by the information maximization principle~\citep{vergassolaInfotaxisStrategySearching2007, reddyInfomaxStrategiesOptimal2016a} which has revealed effective in taxis strategies where the agent needs to find an emitting odour source \citep{martinezUsingInsectElectroantennogram2014, cardeNavigationWindbornePlumes2021, murlisOdorPlumesHow1992}, we rely on an entropic functional for policy decision. More precisely, we choose $\Smax$, the entropy of the posterior distribution of the value of the maximal mean reward, denoted~$\pmax$.

The algorithm relies on an arbitrary prior distribution on the arm mean rewards. 
With independent arm priors, the posterior distribution of the value of the maximal mean reward can be expressed as
\begin{equation}\label{pmaxgeneralexpression}
\begin{split}
\pmax(\val) \df\theta & = \df\Prob\left(\max_k \mu_k = \theta\mid \cF_{t-1}\right)= \sum_{\arm=1}^K \df\Prob(\marmi{\arm} = \val | \cF_{t-1}) \prod_{j\neq \arm} \Prob(\marmi{j} \leq \val | \cF_{t-1}),
\end{split}
\end{equation}
where $\cF_{t-1}=\sigma(X_1(a_1),\ldots,X_{t-1}(a_{t-1}))$ denotes the filtration associated to the observations up to time~${t-1}$.
The associated entropy reads 
\begin{equation}\label{Smax}
 \Smax = -\int_{\thetaDomain} \: \pmax(\val) \ln \pmax(\val) \df\val,
\end{equation}
where $\thetaDomain=[\marmi{\inf},\marmi{\sup}]$ is the support of $\pmax$ (which depends on the nature of the game and can be infinite). Note that, as exemplified by \cref{pmaxgeneralexpression}, $\pmax$ includes the arms' priors and directly depends on the reward distributions.\footnote{In the remainder of the paper, we consider an improper uniform prior over $\bR$, as often considered with Gaussian rewards.} 
The entropy, $\Smax$, is a measure of the information carried by all arms in a single functional, providing a global state description of the game.

Our policy aims to minimize the entropy of $\pmax$. For that, it greedily chooses the arm providing the largest expected decrease in entropy, conditioned on the current knowledge of the game,
\begin{equation}\label{greedygradient}
  \argmin_{k\in[K]} \E{\Smax(t)- \Smax(t-1) \mid \cF_{t-1},\Arm_{t} = \arm}.
\end{equation}
Similar to Thompson sampling, it relies on a Bayesian representation. Yet, it distinguishes itself by providing a deterministic decision procedure given past observations. We stress that $\Smax$ quantifies the available information about the average reward of the best arm. %
This choice contrasts with using the entropy of the probability of the best arm, which is known to overexplore and is suboptimal for regret minimization \citep{reddyInfomaxStrategiesOptimal2016a}. 
Because of this suboptimality, approaches based on the information on the best arm fix this concern by including the expected regret in the functional to favor exploitation \citep{russoLearningOptimizeInformationDirected2014}. Furthermore, we argue that by the definition of $\pmax$, the information carried by the arms' posteriors is sufficiently mixed to ensure an optimal behaviour, as proved in \cref{sec:results}. Since the policy aims to maximize the information about the best arm's mean, it mainly pulls the current best arm to learn more about its value. On the contrary, policies aiming to identify the best arm pull worse empirical arms more often because they are only concerned about the arms' order. 

The information maximization policy based on \cref{greedygradient} has been empirically shown to be competitive with state-of-the-art algorithms \citep{reddyInfomaxStrategiesOptimal2016a} and robust to variations of the prior \citep{reddyInfomaxStrategiesOptimal2016a,barbier-chebbahApproximateInformationEfficient2023} in classic bandit games. However, while \cref{greedygradient} can be numerically evaluated, it cannot be computed in closed form, preventing the gradient from being analytically tractable. This makes intricate to theoretically bound the regret even in the two-armed setting and it also prevents the policy's extension to more complicated bandit settings. 
Additionally, it induces a high computational cost \citep[a trait shared with IDS][]{russoLearningOptimizeInformationDirected2014}, which becomes disadvantageous when considering a large number of arms and at large times (when $\pmax$ is peaked), where one has to manage vanishing numerical precision, making the numerical integration even longer. Finally, the integral form of $\Smax$ prevents fine-tuning, which could prove crucial for achieving or surpassing the empirical state-of-the-art performances.

A second simplified and analytical functional mirroring $\Smax$ has to be derived to address these concerns. This analytical result strengthens the information maximization principle, both by providing novel algorithms that are analytical, tractable and computationally efficient while conserving the main advantages of the exact entropy \citep{reddyInfomaxStrategiesOptimal2016a} and by making theoretical analysis tractable. 

\subsection{Main elements of the entropy analytical approximation}

Here, we devise a set of approximations of $\pmax$ and $\Smax$ to get a tractable analytical algorithm. Given that the best empirical arm and the worse empirical arms have notably distinct contributions to $\pmax$ (Figure~\ref{fig:1pres}(a)), we approximate $\pmax$ while considering the current arms' order. We sort them based on their current posterior means, labelling the highest one as ${\maxa}$ (with an empirical reward of $\meana{\maxa}$) and  $\minasp = [K] \setminus \{\maxa\}$ the set of worse empirical arms. Of course, ${\maxa}$ might differ from the actual optimal arm $k^*$ due to the randomness in the observed rewards. 
We focus on approximating \cref{Smax,pmaxgeneralexpression} when the best empirical arm has already been extensively drawn more often than the other arms. 

The entropy is then decomposed into two tractable terms corresponding to distinct behaviors of $\pmax(\theta)$ when $\theta$ varies:
\begin{equation}\label{theoryS}
\begin{split}
\Sapp &= \Sc + \Stail ,
\end{split}
\end{equation}
The first term, $\Sc$, approximates the contribution around the mode of $\pmax$, while the second term, $\Stail$, quantifies the information carried by the tail of $\pmax$ (corresponding to high rewards, see \cref{fig:1pres}). Each of these terms then corresponds to a part of the entropy where the dominant term of \cref{pmaxgeneralexpression} is distinct (see \cref{firstanalyticalapprox} for details). 


More precisely, by denoting $ \parmi(\val)\df\val= \df\Prob(\marmi{i} = \val \mid \cF_{t})$ the mean posterior density of the associated arm $i$, the tail term is approximated as: 
\begin{equation}\label{Stailexp}
\Stail = - \sum_{\mina\in\minasp} \int_{\teqmin}^{\bsup} \parmin(\val) \ln \parmin(\val) \df\val.
\end{equation}
where $\teqmin$, given in \cref{SifinalformdeltaSi}, approximates $\tpmin$, the value of $\theta$ where the empirical best arm $\maxa$ and the selected worse arm $\mina$ have the same probability of being the best arm (see red and orange curves in \cref{fig:1pres}(b)). Here, $\parmin(\val)$ is the posterior density of the current worse arm evaluated at~$\val$. Roughly, because the better empirical arm has been predominantly drawn, $\parmax(\val)$ decays faster than $\parmin(\val)$, resulting in a tail term (see \cref{Stailexp}) whose main contribution is the worse empirical arm. The approximate entropy of the body component is: 
\begin{equation}\label{Sbodyexp}
\Sc = - \int_{\thetaDomain}  \big( 1 - \sum_{\mina\in\minasp} [1- \cumin(\val)] \big) \parmax(\val)  \ln\parmax(\val) \df\val 
\end{equation}
where $\cumi(\val)= \Prob( \val  > \marmi{i} \mid \cF_t )$ is the cumulative posterior probability of the mean of the arm $i$. \cref{Sbodyexp} is the leading-order term of the mode of $\pmax$, which is mainly contributed to by the best empirical arm.

This approximation of \cref{pmaxgeneralexpression} is good when the best empirical arm has been extensively drawn compared to the worse empirical ones, corresponding to the situation encountered asymptotically for uniformly good algorithms. Surprisingly, the approximation captured by \cref{theoryS} is still accurate enough outside this asymptotic regime to provide a high-performance decision scheme.

\begin{figure}[htpp]
\centering
\includegraphics[scale=0.65]{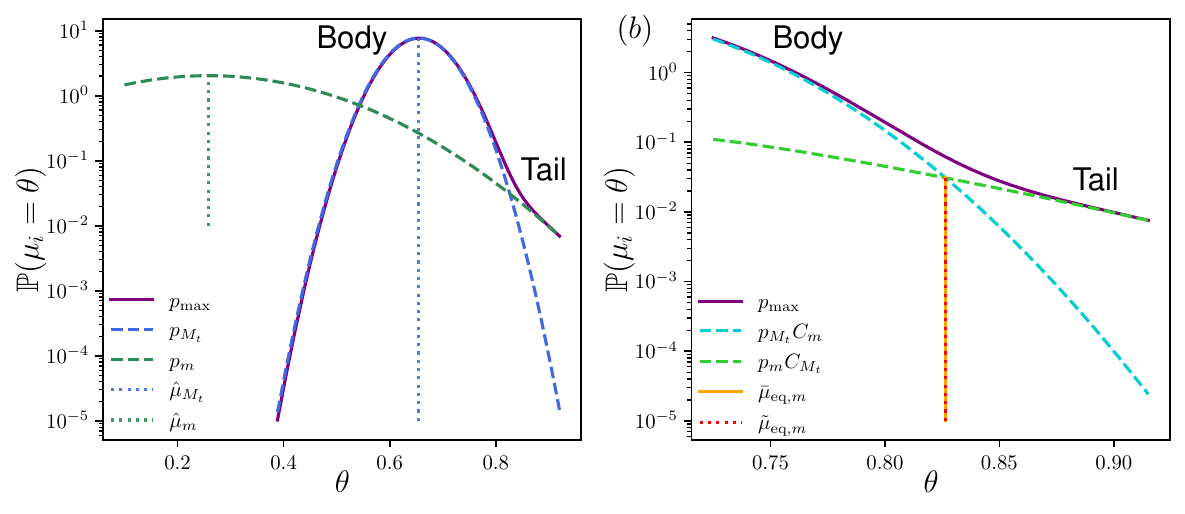}
\caption{\textbf{(a)} Posterior distributions of a two-armed bandit with Gaussian rewards. The dotted lines represent the individual posterior distributions of each arm, $\parmax$ and $\parmin$, while the continuous line represents the posterior of the maximum mean reward of all arms, $\pmax$ (\cref{pmaxgeneralexpression}). 
\textbf{(b)} Zoom of \textbf{(a)} around the point $\tpmin$ where both arms have the same posterior probability of being the best one. $\parmax\cumin$ ($\parmin\cumax$) is the probability that the maximal value is given by the better (worse) empirical arm, and $\teqmin$ is the approximation to $\tpmin$ given in \cref{{SifinalformdeltaSi}}.
}
\label{fig:1pres}
\end{figure}

For Gaussian reward distributions, one can derive an analytical expression for $\teq$ (see \cref{refteqapprox} for details), \cref{Stailexp,Sbodyexp} can be computed exactly  (see \cref{reftailgauss}). However, at this stage, even if we have already obtained a closed-form expression for $\Smax$, it remains too involved to directly compute its exact (discrete) gradient for our decision policy. To finally derive a simplified gradient, we retain only the asymptotic terms of \cref{Stailexp,Sbodyexp} and of the obtained gradient (see \cref{Sigradientgaussian} for derivation details). Finally, the expression of our approximate difference of gradients of the entropy, whether calculated along a given worse empirical arm $k$ or along the best empirical arm, reads:
\begin{equation}\label{Sgradientsimp}
\begin{split}
&\Delta_{\armmax, k} =  \frac{1}{2}\ln ( \frac{\Nmax}{\Nmax + 1}) +   \frac{1}{2 \Nmax}\min\left( \frac{1}{2}\sum_{\mina}^{\minasp} \erfc \left(  \delta \teqmin \right), \, 1-\frac{1}{K}\right)  \\& +  Q(\Nk^{-1},  \ln(\Nmax), \delta \teqk)e^{-\delta \teqk^2} +  \sum_{\mina}^{\minasp} P( \Nmin^{1/2}, \Nmax^{-1}, \ln(\Nmax), \delta \teqmin ) e^{-\delta \teqmin^2} \\
\end{split}
\end{equation}
where $Q$ and $P$ are polynomials given in \cref{SifinalformdeltaSi} and $\delta \teqi = \frac{\sqrt{\Ni} (\teqi - \meana{i})}{\sqrt{2\varp}}$ are standardized variables with $\teqi$ given in \cref{SifinalformdeltaSi}.
In words, $\Delta_{\armmax, k}$ approximates the difference 
\begin{equation*}\begin{split}
    \Delta_{\armmax, k} \approx & \phantom{-}\E{\Smax(t+1)\mid \cF_{t},\Arm_{t+1} = \armmax} - \E{\Smax(t+1)\mid \cF_{t},\Arm_{t+1} = k},
\end{split}
\end{equation*}
which is directly related to greedily maximizing the entropy decrease, described in \cref{greedygradient}. The decision procedure can be summarized as follows: if $\Delta_{\armmax, k}$ is negative for all $k \in \minasp$, the better empirical arm is chosen as it reduces the most the expected value of the approximate entropy. Inversely, if at least one value $\Delta_{\armmax, k}$ is positive, the arm $k$ maximizing $ \Delta_{\armmax, k}$ is chosen.

In conclusion, we have derived an analytical expression for the information available about the maximum expected reward of all arms. We isolated an analytically tractable gradient acting as a decision procedure that eluded previous approximated information derivations~\citep{barbier-chebbahApproximateInformationEfficient2023}. Our
 scheme leads to an efficient numerical implementation by eliminating numerical integrals, substantially improving the computational speed of information maximization, a crucial challenge for information methods, which is also stressed by \citet{russoLearningOptimizeInformationDirected2014} for the IDS algorithm.
We now provide the full implementation of \algoname and bound its regret in the next section.

\subsection{Approximate information maximization algorithm }

The pseudo-code for the \algoname algorithm is presented in \cref{alg:bandit_algorithm} below.

\begin{algorithm}[htbp]
\DontPrintSemicolon
\caption{\algoname Algorithm for $K$ Gaussian arms}
\label{alg:bandit_algorithm}
Draw each arm once, observe reward $\Rewardt(t)$ and update statistics
$\meana{t}$\;
\For(\tcp*[f]{Arm selection}){$t=K+1$ \KwTo $T$}{
    \lIf{$\Nmax \leq \Nmin$}{
    $\Armt\gets\armmax$}
    \Else{
    $\armmax \gets \argmax_{k\in[K]} \meana{k}$\;
    Evaluate $ \mina = \argmax_{k \in \minasp} \Delta_{\armmax, k}$
    following \cref{Sgradientsimp} \;
    \lIf{$\Delta_{\armmax, \mina} \leq 0$}{
    $\Armt\gets\armmax$}
    \lElse{
    $\Armt\gets\armmin$}
    Pull $\Armt$ and observe $\Rewardt(\Armt)$\;
    }
    $\meana{\Armt} \leftarrow \frac{\meana{\Armt}\ndraw{\Armt} +\Rewardt(\Armt)}{ \ndraw{\Armt} + 1} $,
    $\ndraw{\Armt} \leftarrow \ndraw{\Armt} + 1 $\tcp*[f]{Update statistics}
}
\end{algorithm}

The best empirical arm is drawn by default if there exists one empirical arm $\mina$  that has been drawn more frequently ~${\Nmax \leq \Nmin}$. In such a case, both entropy components in \cref{theoryS} are mainly contributed to by $\armmax$.  

\section{Regret bound}\label{sec:results}
This section provides theoretical guarantees on the performance of \algoname[.] More precisely, \cref{thm:main} below states that \algoname is asymptotically optimal on the multi-armed bandits problem with Gaussian rewards.
\begin{theorem}\label{thm:main}
    For Gaussian reward distributions with variance $\sigma^2$, the regret of \algoname satisfies for any mean vector $\pmb{\mu}\in \bR^K$
    \begin{equation*}
        \limsup_{T\to\infty} \frac{\RegretT}{\ln (T)} \leq \sum_{k,\mu_k<\mu^*}\frac{2\sigma^2}{\mu^*-\mu_k},
    \end{equation*}
    where $\mu^* = \max_{k\in[K]}\mu_k$.
\end{theorem}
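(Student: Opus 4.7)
The plan is to follow the classical two-armed-bandit blueprint. Because there are only two arms, $\RegretT = |\mu_1-\mu_2|\,\mathbb{E}[N_s(T)]$ with $s = \argmin_k \mu_k$, so it is enough to show that $\mathbb{E}[N_s(T)] \leq \frac{2\sigma^2}{(\mu_1-\mu_2)^2}\ln T + o(\ln T)$. Writing $N_s(T) = \sum_{t=3}^T \mathbf{1}\{A_t = s\}$, I would decompose each round according to whether the empirical order at time $t$ is correct ($s = \armmin$) or inverted ($s = \armmax$). The inverted-order case is a quick preliminary: the forced-pull rule ``$A_t = \armmax$ whenever $\Nmax \leq \Nmin$'' guarantees $\Nmin(t) \geq \lfloor t/2 \rfloor$ on every trajectory, so a standard Gaussian tail bound on $\hat\mu_s - \hat\mu_{k^*}$ yields a probability decaying exponentially in $t(\mu_1-\mu_2)^2$, whose sum over $t$ is $O(1)$.

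The substantial work is in the correct-order case, where $A_t = s$ requires $\Nmax > \Nmin$ together with $\Delta \geq 0$. The plan is to identify a deterministic threshold $N^{\star}(T) \approx \frac{2\sigma^2}{(\mu_1-\mu_2)^2}\ln T$ beyond which $\Delta < 0$ holds on a high-probability event, so no further pull of $s$ is triggered. Substituting \cref{thetaeq} into \cref{Sgradientsimp} in the regime $\Nmin \ll \Nmax \leq T$ enforced by the forced-pull rule, I would asymptotically expand $\Delta$ and separate it into the negative ``mass'' term
\begin{equation*}
    \tfrac{1}{2}\ln\bigl(\Nmin/(\Nmin+1)\bigr) \sim -\tfrac{1}{2\Nmin},
\end{equation*}
and a positive ``Gaussian tail'' contribution carrying a factor $\exp\!\bigl(-\Nmin(\meanmax-\meanmin)^2/(2\sigma^2)\bigr)$ multiplied by polynomial prefactors in $\Nmin,\Nmax$ and $\ln\Nmax$. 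Balancing the two contributions and using $\Nmax \leq T$ to scale the logarithm then pins down the sign-change threshold, which up to lower-order corrections reads $\frac{2\sigma^2\ln T}{(\meanmax-\meanmin)^2}$.

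The hard part is converting the \emph{empirical} gap $\meanmax-\meanmin$ into the \emph{true} gap $|\mu_1-\mu_2|$ without degrading the leading constant $2\sigma^2$. Since the threshold depends inversely on the squared gap, concentration errors propagate nonlinearly. I would handle this by introducing a shrinking ``good event'' $\mathcal{E}_T = \bigl\{|\hat\mu_k(n)-\mu_k| \leq \varepsilon_T\text{ for every }k \text{ and every }n\geq n_0\bigr\}$ with $\varepsilon_T \to 0$ slowly enough that $\mathbb{P}(\mathcal{E}_T^c) = o(1/\ln T)$ but fast enough that $\varepsilon_T = o(|\mu_1-\mu_2|)$. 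On $\mathcal{E}_T$, the empirical gap is within $2\varepsilon_T$ of the true one, so the previous threshold argument gives $N_s(T) \leq \frac{2\sigma^2(1+o(1))}{(\mu_1-\mu_2)^2}\ln T$; on $\mathcal{E}_T^c$, the crude bound $N_s(T)\leq T$ combined with a Gaussian maximal inequality contributes only $o(\ln T)$ in expectation. Multiplying by $|\mu_1-\mu_2|$ concludes. The main technical obstacle I anticipate is controlling the polynomial prefactors of the Gaussian tail term with enough precision to recover exactly the Lai--Robbins constant $2\sigma^2$, since any loose handling of these prefactors would inflate the constant and break the claim of asymptotic optimality.
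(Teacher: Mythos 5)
Your plan for the correct-order case matches the skeleton of the paper's argument (\cref{lemma:inclusion}: on a concentration event, the algorithmic inequality $\Delta\geq 0$ forces $N_{\armmin}(t)$ to be at most roughly $\frac{2\varp\ln t}{(1-2\varepsilon)^2\Delta_2^2}$, obtained by inverting $\sqrt{x}e^{-x}$ via the Lambert $W$ function), but your treatment of the inverted-order case contains the central gap. The rule ``pull $\armmax$ whenever $\Nmax\leq\Nmin$'' does \emph{not} guarantee $\Nmin(t)\geq\lfloor t/2\rfloor$; it guarantees essentially the opposite, namely that whenever the empirically worse arm is pulled one has $N_{\armmax}(t)\geq t/2\geq N_{\armmin}(t)$. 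In the inverted configuration the optimal arm \emph{is} the empirically worse arm, so you have no a priori lower bound on its pull count, its empirical mean need not concentrate, and the per-round inversion probability is only polynomially small in $t$ (its exponent depends on $\Delta_2$), hence not summable. Establishing that the optimal arm is nonetheless pulled at least $t^{b}$ times with summable failure probability is the first and hardest step of the paper's proof (\cref{lemma:tb}); it is not a union bound but an argument specific to the entropy gradient: if $N_1(t)\leq t^b$, then arm $2$ was pulled at some $t'$ with $N_2(t')\geq t-t^b-1$ and $N_2(t')>N_1(t')$, and the inequality $S_2\geq S_1$ together with the form of the tail term forces $N_1(t')\geq \frac{t-t^b-1}{\sqrt{\pi}}\tilde{x}e^{-\tilde{x}^2}$ with $\tilde{x}$ controlled on a concentration event, yielding $N_1(t')=\Omega(\ln^{3/2}t)$ and a contradiction. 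Without some version of this step your proof does not go through.

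Two further points. First, the balance you describe cannot produce the threshold you state: equating the Gaussian-tail term with a mass term of order $\frac{1}{2\Nmin}$ gives $\Nmin=O(\ln\ln T)$, not $\frac{2\varp\ln T}{\Delta_2^2}$; the correct balance is against $\frac{1}{2}\ln\left(1+\frac{1}{N_{\armmax}}\right)\geq \frac{1}{t}-\frac{1}{t^2}$ (the main-text display of $\Delta$ writes $\Nmin$ where the appendix derivation and the proof use $\Nmax$). Second, your good event $\cE_T$, uniform over all $n\geq n_0$ with $\varepsilon_T\to0$ and fixed $n_0$, cannot satisfy $\Prob(\cE_T^c)=o(1/\ln T)$, and even if it did, the crude bound $N_s(T)\leq T$ on $\cE_T^c$ would contribute $o(T/\ln T)$ rather than $o(\ln T)$. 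The paper avoids both problems by reindexing the deviation sums by the number of pulls and applying Hoeffding's inequality per $n$ (\cref{lemma:surestim}), and by using a self-normalized deviation inequality for the heavily pulled arm (\cref{lemma:underestim}).
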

With Gaussian rewards, the asymptotic regret of \algoname thus exactly reaches the lower bound of \citet{lai1985asymptotically} given by \cref{LaiRobbin}. A non-asymptotic version of \cref{thm:main} is given by \cref{thm:app} in \cref{app:proof}. We briefly sketch the proof idea below and refer to  \cref{app:proof} for the complete proof.

\paragraph{Sketch of the proof.}
We assume for sake of clarity in this sketch that $\mu_1>\mu_k$ for any $k\geq 2$. 
The structure of the proof is similar to the one found in \citet{kaufmann2012thompson}. In particular, the first main step shows that the optimal arm is pulled at least $\sqrt{t}$ times with high probability. This result holds because otherwise, the contribution of arm $1$ to the tail of the distribution would dominate the contribution of other arms in the approximate information. In that case, pulling the first arm would naturally lead to a larger decrease in entropy, which ensures that the optimal arm is always pulled a significant amount of times.

Then, we only need to work in the asymptotic regime where arm $1$ is pulled at least $\sqrt{t}$ times and we aim at bounding the number of pulls on the arm $k\geq 2$. Additionally, we restrict ourselves to a large number (in $\log(T)$) of pulls on arm $k$ and automatically count the pulls before that point in the regret. As a consequence, we can show that with high probability: 
\begin{equation*}
    \meana{\armmax} \geq \mu^* - \sqrt{\frac{6\varp\ln t}{\sqrt{t}}} \quad \text{and} \quad \meana{k}\leq \mean_k + \varepsilon
\end{equation*}
for some arbitrary $\varepsilon>0$. An important property of entropy is that it approximates the behaviour of the bound of \citet{lai1985asymptotically}. 
More precisely, in the asymptotic regime, the difference of the entropy increments behaves as
\begin{equation}\label{eq:asymptoticentropy}\begin{aligned}
    \Delta_{\armmax, k} \approx -\frac{1}{2N_{\armmax}} + Q(N_{k})e^{-\frac{N_{k}(\mu_1-\mu_k)^2}{2\varp}} + \sum_{i\neq\armmax}P(N_i)e^{-\frac{N_{i}(\mu_1-\mu_i)^2}{2\varp}},
    \end{aligned}
\end{equation}
where $Q_k$ and $P_i$ are polynomials that also depend on extra variables (see Equation~\ref{Sgradientsimp}). Manipulating these polynomial terms altogether is intricate, but we can still show that if the arm $k$ is pulled, this means the term $e^{-\frac{N_{k}(\mu_1-\mu_k)^2}{2\varp}}$ somewhat dominates the other exponents in the sum of \cref{eq:asymptoticentropy}. This then implies that $N_{k}$ is of order at most $\frac{2\varp\ln T}{(\mu^*-\mu_k)^2}$, as arm $k$ is only pulled if $ \Delta_{\armmax, k}\geq 0$. 
\hfill \qed

Our policy is deterministic at each time step while displaying a logarithmic regret, showing that intuitions from~\citet{russoLearningOptimizeInformationDirected2014} of linear regrets for stationary (in the sense they only depend on the posterior distribution) deterministic algorithms was inexact. Moreover, our regret bound is frequentist, in opposition to the Bayesian regret bound obtained for IDS \citep{russoLearningOptimizeInformationDirected2014}. As a consequence, \algoname does not need a well-specified prior: using a uniform prior, as done in our work, is a well-suited choice. 
Also, the required form of the entropy for the proof is general. The algorithm yields an optimal regret as long as we are guaranteed that the optimal arm is pulled a significant amount of times with high probability and that the asymptotic regime behaves as \cref{eq:asymptoticentropy}. Hence,  \cref{thm:main} will hold for a large family of entropy approximations \citep[and likely for generalizations to free energies too, as in][]{massonOlfactorySearchesLimited2013b} as long as the approximation is accurate enough to not yield trivial behaviors in the short time regime. Additionally, the approximate framework devised here allows fine-tuning the formulas to improve short-time performance all the while ensuring asymptotic optimality by keeping the correct asymptotic terms.

\section{Experiments}\label{sec:expe}

This section investigates the empirical performance of \algoname (\cref{alg:bandit_algorithm}) on numerical examples. All details of the numerical experiments are given in \cref{app:expe}.

We start by considering two arms with Gaussian rewards~\citep{hondaAsymptoticallyOptimalBandit2010a} of unit variance and means $\marmi{\arm}$ drawn uniformly from $[0,1]$. \cref{fig:1} compares the Bayesian regret (i.e., the regret averaged over all values of $(\marmi{1},\marmi{2})$ in $[0,1]\times[0,1]$) of \cref{alg:bandit_algorithm} with the state-of-the-art algorithms  UCB-tuned, Thompson sampling, Thompson sampling+, KLUCB++, and MED \citep{kaufmannThompsonSamplingAsymptotically2012a,pilarskiOptimalPolicyBernoulli2021,cappeKullbackLeiblerUpper2013, jinFiniteTimeRegretThompson2022, hondaAsymptoticallyOptimalPolicy2011a, menardMinimaxAsymptoticallyOptimal2017}. We refer to \cref{app:otheralgos} for an overview and detailed descriptions of these bandit algorithms. The Bayesian regret of \algoname empirically scales as $\log\left(T\right)$. Its long-time performance matches Thompson sampling, as implied by \cref{thm:main}, while relying on a (conditionally) deterministic decision process. Additionally, \algoname outperforms Thompson sampling at both short and intermediate times (see \cref{app:moredata} for finer measurements). \algoname particularly outperforms Thompson sampling when the arms are difficult to distinguish due to their mean rewards being close (see examples in \cref{app:fixedmeanexpe} with single instance regret experiments).

\begin{figure}[htbp]
\centering
\includegraphics[scale=0.6]{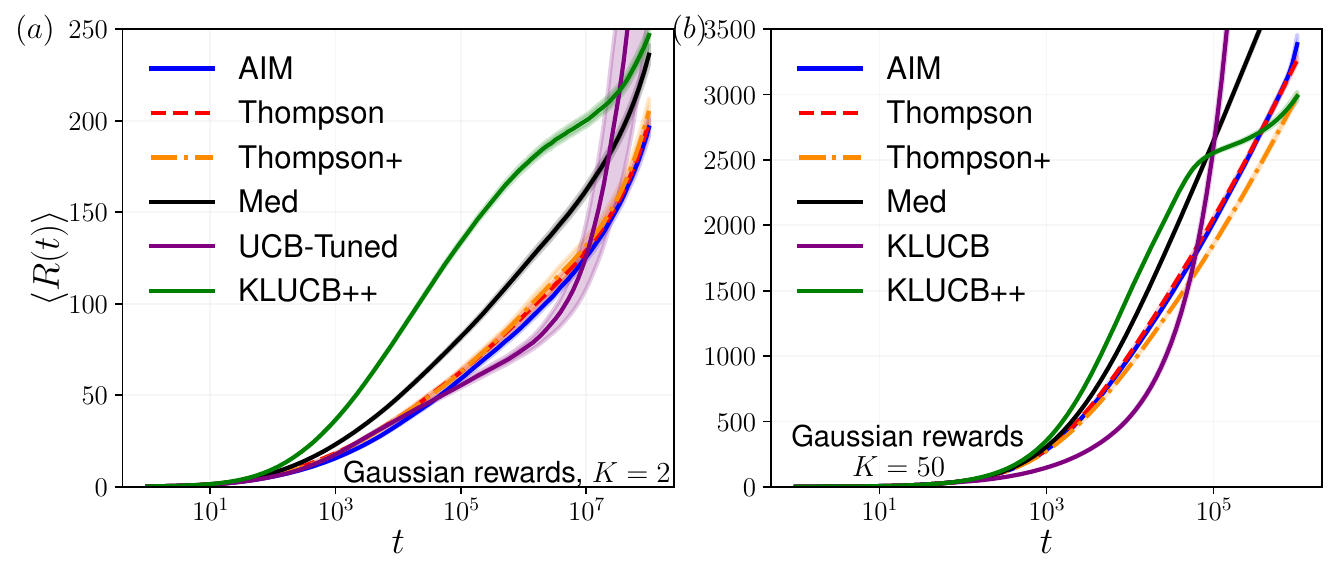}
\caption{ Evolution of the Bayesian regret for (a) 2-armed and (b) 50-armed bandit with Gaussian rewards under a uniform mean prior. Regret is averaged over $8000$ for (a) and  $2000$ runs for (b) Confidence intervals show the standard deviation.}
\label{fig:1}
\end{figure}

\algoname yields strong performance in both two-armed Gaussian and 50-armed Gaussian rewards case, as predicted by our theoretical analysis. We now aim to extend our method to other bandit settings. \cref{fig:3} presents the performance of \algoname when adapted to Bernoulli rewards~\citep{pilarskiOptimalPolicyBernoulli2021} with arm means drawn uniformly in $[0,1]$. This adaptation is described in detail in \cref{sec:extensions} below. The performance of \algoname is comparable to Thompson sampling here. Additionally, \algoname performs comparably to Thompson sampling for close mean rewards (see \cref{app:fixedmeanexpe2}). Additionally, for $50$ arms with Bernoulli rewards AIM's short-time efficiency is comparable to Thompson sampling, and it is significantly more efficient at intermediary times while showing the same logarithmic scaling at long times as Thompson sampling. 

Hence, our algorithm shows strong empirical performances compared to state-of-the-art baselines for both Bernoulli and Gaussian rewards while providing outstanding effectiveness when facing multiple arms with Bernoulli rewards. Experiments suggest that AIM displays the same typical worst-case regret as Thompson sampling (which is minimax optimal up to $\sqrt{\ln K}$ for sub-Gaussian rewards), but proving a theoretical bound remains challenging and left for future work.  Of note, similar observations are drawn in \cref{app:fixedmeanexpe,app:fixedmeanexpe2} for non-Bayesian versions of the regret, with fixed bandit instances. These observations support the robustness of \algoname and its potential for extensions to more complex bandit settings.

\begin{figure}[htbp]
\centering
\includegraphics[scale=0.6]{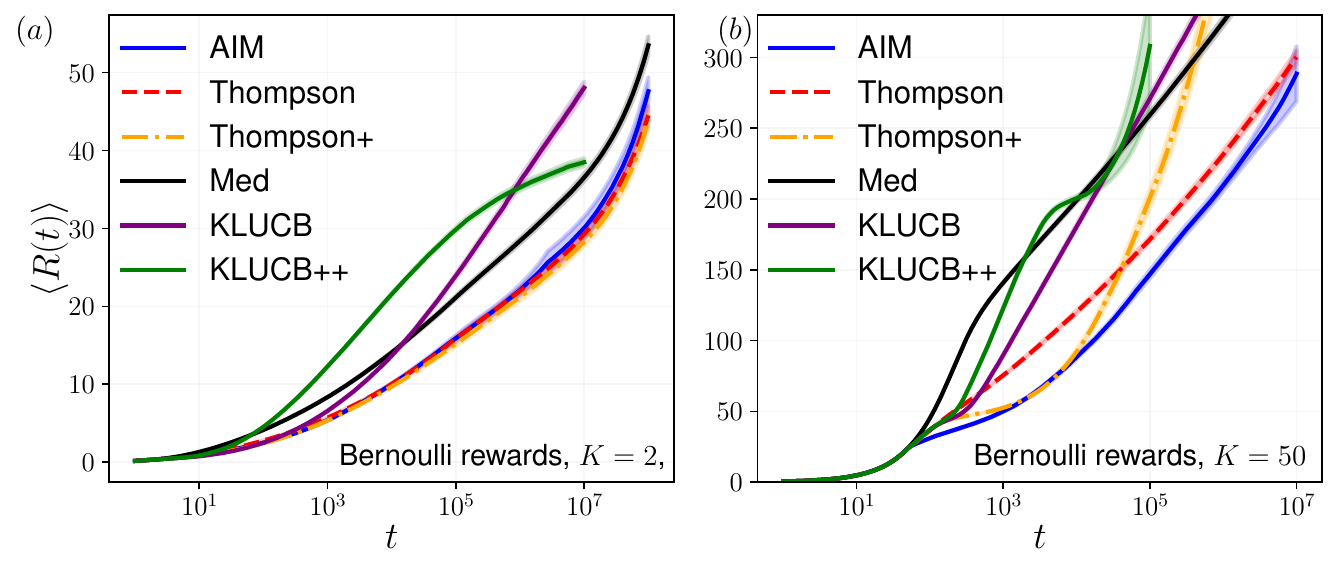}
\caption{Evolution of the Bayesian regret for (a) 2-armed and (b) 50-armed bandit with Bernoulli rewards under a uniform mean prior. The regret is averaged over $16000$ runs for (a) and  $2000$ runs for (b). Confidence intervals show the standard deviation.}
\label{fig:3}
\end{figure}

\section{Extensions}\label{sec:extensions}

We apply our information maximisation approach to Bernoulli bandits both with two and with many arms, where it shows strong empirical performances (see \cref{fig:3} above). 
This section describes the extensions of \algoname to this case and discusses potential extensions to more general bandit settings. 

\paragraph{Exponential family bandits.}

Since \cref{pmaxgeneralexpression} explicitly relies on the arms' posterior distributions, information maximization methods can be directly extended to various reward distributions. 
In particular, when the reward distributions belong to the exponential family \citep[see][and \cref{app:exponential} for details on such distributions]{kordaThompsonSampling1dimensional2013}, an asymptotic and analytical expression of the entropy can be derived for the case of uniform priors (see \cref{expofamily} for more details), yielding
%
\begin{equation}\label{Sappgeneral}
\begin{split}
\Sapp &= \frac{1}{2}\ln ( 2\pi  \Vi)  \left[ 1 - \frac{ e^{- \Nmin \kullfct(\thetamin,\thetaeq)}}{ \Nmin \partial_2 \kullfct(\thetamin,\thetaeq) \sqrt{2 \pi \Vmin}} \right] +   \frac{\kullfct(\thetamin,\thetaeq)  e^{- \Nmin \kullfct(\thetamin,\thetaeq)}}{ \partial_2\kullfct(\thetamin,\thetaeq) \sqrt{2 \pi \Vmin}  }.
\end{split}
\end{equation}
Here $\kullfct(\hat{\theta}_{i},\thetaeq)$ is the \kulleib divergence between the reward distribution parameterized by $\hat{\theta}_{i}$ and $\thetaeq$ where $\theta$ is the family parameter, and $ \partial_2\kullfct$ denotes its derivative w.r.t.\ the second variable. All the steps leading to \cref{Stailexp,Sbodyexp} in \cref{sec:strategies} are not specific to Gaussian rewards. The main difference lies in their asymptotic simplifications obtained afterwards with Laplace's method. Our implementation of \algoname to Bernoulli rewards (a specific case of the exponential family) with \cref{Sappgeneral} shows comparable performance to state-of-the-art algorithms (see \cref{fig:1}), supporting its adaptability to general settings.  We believe that \algoname should be optimal for all exponential family reward distributions and general prior distributions and that similar proof techniques can be used (see \cref{expofamily} for a detailed discussion). However, significant work still remains to ensure that the asymptotic regime, where all arms have been sufficiently drawn, is reached for any reward distribution and will be addressed in future work.

\paragraph{Other bandit settings.} Here, we provide a quick overview of several other bandit settings for which approximate information maximization, adapted to the specific bandit problem, should provide efficient algorithms.
First, we emphasize that \algoname['s]{} partitioning between body and tail components remains relevant even when dealing with heavy-tailed \citep{lee2023optimality} or non-parametric reward distributions \citep{baudrySubsamplingEfficientNonparametric2020}. It should thus be able to provide strong guarantees in these settings, similarly to Thompson sampling. Secondly, let us stress that information can also be quantified for unpulled arms, which may prove crucial when facing large numbers of arms. The agent could quantify the information of the ``reservoir'' of unpulled arms to anticipate the information gained from exploring these unpulled arms. Additionally, if the agent has access to the remaining time, it can not only evaluate the expected information gain when pulling an arm for a single round but also evaluate the information gain of multiple pulls of the same arm. We believe that such a consideration might be pivotal when facing many arms, since the limited amount of time does not allow to pull all the arms \citep{bayati2020unreasonable} sufficiently. Thirdly, in linear bandits, where arms are correlated with each other \citep{li2010contextual}, \algoname will be efficient because pulling a specific direction provides information on correlated directions, the shared information gain could be leveraged by information-based methods to yield strong performances. Finally, we could consider pure exploration problems \citep{bubeckPureExplorationFinitelyarmed2011a, locatelliOptimalAlgorithmThresholding2016,kalyanakrishnanPACSubsetSelection2012} where the agent's goal is directly linked to an information gain, thus making the information maximization principle an inherent candidate when a suitable entropy is derived from the underlying bandit structure and problem objective.

A last advantage of \algoname lies in its possible extension to multiple constraints that would be introduced using Lagrange multipliers (or borrowed from physics reasoning by defining free energy), further improving its adaptability to various settings and specific requirements.

\section{Conclusion}\label{sec:conclusion}

This paper introduces a new algorithm class, Approximate Information Maximization (AIM), which leverages approximate information maximization of the whole bandit system to achieve optimal regret performances. This approach builds on the entropy of the posterior of the arms' maximal mean, from which we extract a simplified and analytical functional at the core of the decision scheme. It enables easily tunable and tractable algorithms, which we prove to be optimal for multi-armed Gaussian bandits. Numerical experiments for Bernoulli rewards with two or several arms emphasize the robustness and efficiency of AIM. An additional strength of \algoname lies in its efficiency at short times and when the arms have close mean rewards where it outperforms existing state-of-the art. Further research should focus on adjusting the information maximization framework to more complex bandit settings, including many-armed bandits, linear bandits and thresholding bandits, where appropriately selected information measures can efficiently apprehend the games' structure and correlations.

\newpage


\bibliographystyle{plainnat}
\bibliography{biblio.bib,Aistats_article}

\clearpage



\appendix
\onecolumn

\addcontentsline{toc}{section}{Appendix} 
\part{Appendix} 
\parttoc 





\section{Towards an analytical approximation of the entropy}\label{analyticalapprox}

In this section, we recapitulate all the steps leading to the analytical expression constitutive of our \algoname algorithm. We stress, that it involves exact derivations but also simplifications to considerably simplify the final form of AIM. Therefore, alternative approaches could  lead to a slightly different version of AIM. However, our chosen method retains the essential features which emerges in the asymptotic regime while providing a simple version of AIM.  

\subsection{The partitioning approximation}\label{firstanalyticalapprox}

We start by commenting on the partition scheme and the approximations leading to the following body/tail expressions. We first recall the expression for $\pmax(\val)$, with the arms ordered along $\armmax$,
\begin{equation}\label{Sipmaxgeneralexpression}
\begin{split}
\pmax(\val) &= \left(  \parmax(\theta)  \prod_{ \mina }^{\minasp} \cumin(\theta) + \sum_{\mina}^{\minasp} \cumax(\theta) \parmin(\theta)  \prod_{j\neq \mina}^{\minasp} \cumj(\theta)  \right).
\end{split}
\end{equation}
where we remind that $\cumin(\theta)$ is the cumulative posterior probability of the mean of arm $m$

Because of its dependency along all arms, there is no unique dominant term in  \cref{Sipmaxgeneralexpression}, and distinct regimes emerge depending on $\theta$ and the state of the game. Then, we assume to isolate distinct regimes contributing asymptotically to the entropy while significantly simplifying them. It will considerably simplify the derivation of an analytical expression for the body/tail components in the next section. The next paragraphs will then present heuristic arguments justifying our simplification scheme.

We start by rewriting the exact entropy expression isolating $\armmax$:
\begin{equation}\label{SiSmaxmultiexpression}
\begin{split}
&\Smax = - \int_{\thetaDomain}  \parmax(\theta)   \prod_{ \mina }^{\minasp} \cumj(\theta)  \ln \left(  \parmax(\theta)  \prod_{ j}^{\minasp}  \cumj(\theta) + \sum_{ \mina }^{\minasp} \parmin(\theta) \cumax(\theta) \prod_{j\neq \mina}^{\minasp}  \cumj(\theta)  \right) \df\theta
\\ & - \sum_{\mina}^{\minasp}\int_{\thetaDomain}  \parmi(\theta) \cumax(\theta)  \prod_{j\neq \mina}^{\minasp} \cumj(\theta)   \ln \left(  \parmax(\theta)  \prod_{j}^{\minasp} \cumj(\theta) + \sum_{\mina }^{\minasp} \parmin(\theta) \cumax(\theta) \prod_{j\neq \mina}^{\minasp} \cumj(\theta)  \right) \df\theta.\\
\end{split}
\end{equation}
Let us briefly comment on the different contributions to \cref{SiSmaxmultiexpression}. We aim to keep the leading orders of $\pmax(\val)$ when $\Nmax \gg \Nmin \gg 1$ and $\meanmax > \meanmin$ for all $\mina$ in the set of current worse empirical arms $\minasp$. Here, the posterior distributions are assumed uni-modal. The first term is the leading order in the vicinity of the mode of $\meanmax$. Also, since $\Nmax > \Nmin$,  $\parmax(\val)$ is more concentrated than all $\parmin(\val)$, resulting in the dominance of the second term in the distribution's tail (\textit{i.e.}, for high rewards). 

We now decompose the entropy in the body/tail components defined in the main text, the first term of \cref{SiSmaxmultiexpression} will form the body component, and the sum over all worse empirical arms will compose the tail. We now define $\tpmin$ the intersection point associated to the arm $\mina$ verifying $\cumin(\tpmin) \parmax(\tpmin) = \cumax(\tpmin)  \parmin(\tpmin)$. Then, in the asymptotic regime, $\tpmin$ will verify  $\parmin(\val) \gg \parmax(\val)$ for $\theta > \tpmin$ and $\parmin(\val) \ll \parmax(\val)$ for $\theta < \tpmin$. Again, we will assume to neglect the transition regime where $\tpmin \sim \theta$ where both distributions are of the same order because it is narrow (in the asymptotic regime) and has very little influence on the total value of the entropy.

To get the body component, we consider the first term of \cref{SiSmaxmultiexpression}. We neglect all the inner terms inside the logarithm which is then dominated by $\armmax$. Next, by noticing that $ \cumi(\theta) \approx 1 $ is in the vicinity of $\meanmax$, we make a first-order expansion of the remaining product along all the worse empirical arms. Since the inner term of the body component is negligible for $\theta > \min( \{ \tpmin, \mina \in \minasp \})$ (because of its dependency along $\parmax$), we ignore that our simplification is no more valid in this specific regime without loss of consistency. Taken together we obtain the body expression of the main text:

\begin{equation}\label{SiSP1}
\begin{split}
\Sc  = - \int_{\thetaDomain}  \big( 1 - \sum_{\mina}^{\minasp} [1- \cumin(\val)] \big) \parmax(\val)  \ln\parmax(\val) \df\val.\\
\end{split}
\end{equation}

Then, we consider the additional terms (each denoted as $\mina$) in \cref{SiSmaxmultiexpression}. First, each term of the sum is negligible to the first one for $\theta < \tpmin$, we then only keep the upper part of the integral where $\theta > \tpmin$. Because $\Nmin \gg 1$ and $\theta > \min( \{ \tpmin, i \neq \armmax \}) > \meanmax > \meana{j}$, we approximate all the cumulative by one. Finally, to get a simplified expression for the increment, we assume to neglect all the posterior distributions except for $\parmi(\val)$ inside the logarithm of the $i$-th term and approximates $\tpmin$ (see next section) which leads to the tail expression:

\begin{equation}\label{SiSP1_bis}
\begin{split}
\Stail = - \sum_{\mina }^{\minasp} \int_{\teqmin}^{\bsup} &\parmin(\val) \ln \parmin(\val) \df\val.
\end{split}
\end{equation}

note that some of these posterior distributions ($ j \neq \mina, \armmax$) are not negligible compared to $ \parmin (\val) $ at a given $ \val$. However, this cross-information between current suboptimal arms is asymptotically negligible regarding the decision procedure (which largely resumes as balancing exploiting the best empirical solution compared to exploring worse empirical arms) while unnecessarily complicating the increment evaluation.

Finally, we obtain the full expression of the entropy approximation:

\begin{equation}\label{KmGaussian1}
\Sapp= - \int_{\thetaDomain}  \bigg( 1 - \sum_{\mina }^{\minasp} [1- \cumin(\val)] \bigg) \parmax(\val)  \ln\parmax(\val) \df\val - \sum_{\mina }^{\minasp} \int_{\teqmin}^{\bsup} \parmi(\val) \ln \parmi(\val) \df\val.
\end{equation}

\subsection{ Asymptotics of the intersection point}\label{refteqapprox}
    
In this section, we derive the asymptotic expression of the intersection point (defined above as $\teqmin$) where the distributions $\cumin(\teqmin) \parmax(\teqmin)$ and $\cumax(\teqmin)  \parmin(\teqmin)$ intersect (at their highest value if they intersect more than once). Here, we consider Gaussian rewards and the intersection between $\armmax$ and a given worse empirical arm denoted $\mina$. The exact equation verified by the intersection point $\tpmin$ is:
\begin{equation}\label{Sithetaeq1}
\begin{split}
& \frac{ \sqrt{\Nmax} e^{- \frac{\Nmax(\tpmin- \meanmax)^2}{2 \sigma^2} }}{\sqrt{2\pi \sigma^2}}   \frac{1}{2}\left[ 1 + \erf \left(\frac{\sqrt{\Nmin}(\tpmin- \meanmin)}{\sqrt{2 \sigma^2}} \right) \right]=  \\& \hspace{3cm}  \frac{ \sqrt{\Nmin} e^{- \frac{\Nmin(\tpmin - \meanmin)^2}{2 \sigma^2} }}{\sqrt{2\pi \sigma^2}}   \frac{1}{2}\left[ 1 + \erf \left(\frac{\sqrt{\Nmax}(\tpmin - \meanmax)}{\sqrt{2 \sigma^2}} \right) \right] .\\
\end{split}
\end{equation}

Taking the logarithm of \cref{Sithetaeq1} and normalizing the last term leads to:
\begin{equation}\label{Sithetaeq2}
\begin{split}
&   \frac{\Nmin(\tpmin - \meanmin)^2}{2 \sigma^2}  -  \frac{\Nmax(\tpmin - \meanmax)^2}{2 \sigma^2}  +  \frac{1}{2}\ln \frac{\Nmax}{\Nmin} + \ln \left[ \frac{1 + \erf \left(\frac{\sqrt{\Nmin}(\tpmin - \meanmin)}{\sqrt{2 \sigma^2}} \right)}{ 1 + \erf \left(\frac{\sqrt{\Nmax}(\tpmin - \meanmax)}{\sqrt{2 \sigma^2}} \right)} \right] = 0.
\end{split}
\end{equation}

The distributions are uni-modal, and assuming that $\meanmax>\meanmin$, $\Nmax > \Nmin$ and recalling that $\tpmin$ is the highest intersection, we get that $\tpmin >\meanmax > \meanmin$. Both error functions are then bounded in $[0,1]$, making the last term bounded as well.
We then approximate $\tpmin$ with $\teqmin$ by neglecting the last term, which leads to the following solution:
\begin{equation}\label{Sithetaeq6}
\begin{split}
&\teqmin = \meanmax + \frac{\Nmin(\meanmax -\meanmin)}{\Nmax-\Nmin}+  \sqrt{ \frac{\Nmax \Nmin}{(\Nmax - \Nmin)^2} ( \meanmax-\meanmin)^2  + \frac{\sigma^2 }{\Nmax- \Nmin}\ln \left( \frac{\Nmax}{\Nmin} \right)  } .
\end{split}
\end{equation}
Note that \cref{Sithetaeq6} relies on both $\meanmax > \meanmin$ and $\Nmax>\Nmin$.  For $\Nmax \leq \Nmin$, even if the above $\teqmin$ can be computed, it does not quantify the tail contribution. As a matter of fact, for $\Nmax \leq \Nmin$, the tail is always dominated by $\parmax$, which means that it has already been included in the main mode $\Sc$. Then, in this specific configuration, we take the contribution of the arm $\mina$ to $\Stail$ equals to $0$ (in other words $\teqmin = \bsup$).

\subsection{Closed-form expressions for the main mode's contribution}\label{SiScgaussian}

Here, we derive the $\Sc$ expression given in the main text for Gaussian rewards distribution. 
Inserting the Gaussian form of the posterior into \cref{Sbodyexp} gives:

\begin{equation}\label{SiSb1}
\begin{split}
\Sc = &-\int_{-\infty}^{+\infty} \frac{ \sqrt{\Nmax} e^{- \frac{\Nmax(\val - \meanmax)^2}{2 \sigma^2} }}{\sqrt{2\pi \sigma^2}}  \left(-\frac{1}{2}\ln( \frac{2 \pi \sigma^2}{\Nmax}) - \frac{\Nmax(\val - \meanmax)^2}{2 \sigma^2} \right) \times \\ &\hspace{3cm}  \left( 1 - \sum_{\mina}^{\minasp} \frac{1}{2}\left[ 1 - \erf \left(\frac{\sqrt{\Nmin}(\val - \meanmin)}{\sqrt{2 \sigma^2}} \right) \right] \right) \df\val,
\end{split}
\end{equation}

We integrate the constant part of the first term, denoted $T_1$ by the use of the following identity~\citep{ngTableIntegralsError1969}:
\begin{equation}\label{Sierfint}
\begin{split}
\int_{-\infty}^\infty & \bigg[1+\erf \left(\frac{\theta-\theta_1}{\sqrt{2V_1}} \right) \bigg] \frac{ e^{-\frac{(\theta-\theta_2)^2}{2 V_2}}}{\sqrt{2 \pi V_2}}\df\theta = 
\left[1  +  \erf \left(\frac{\theta_2-\theta_1}{\sqrt{2} \sqrt{V_2 + V_1}} \right) \right],
\end{split}
\end{equation}
which leads to 

\begin{equation}\label{SiSb2}
\begin{split}
T_1 &= \frac{1}{2}\ln\left(\ \frac{2 \pi \sigma^2}{\Nmax} \right) \int_{-\infty}^{+\infty} \frac{ \sqrt{\Nmax} e^{- \frac{\Nmax(\val - \meanmax)^2}{2 \sigma^2} }}{\sqrt{2\pi \sigma^2}}  \bigg[ 1 - \frac{1}{2} \sum_{\mina }^{\minasp}  1 - \erf \left(\frac{\sqrt{\Nmin}(\val - \meanmin)}{\sqrt{2 \sigma^2}} \right)  \bigg] \df\val \\
&= \frac{1}{2}\ln\left(\ \frac{2 \pi \sigma^2}{\Nmax} \right) \left( 1 - \sum_{\mina}^{\minasp}  \frac{1}{2}\left[ 1 - \erf \left(\frac{\sqrt{\Nmin}(\meanmax - \meanmin)}{\sqrt{2 \sigma^2 (\frac{1}{\Nmax}+ \frac{1}{\Nmin})}} \right) \right] \right)\\
&= \frac{1}{2}\ln\left(\ \frac{2 \pi \sigma^2}{\Nmax} \right) \left( 1 - \sum_{\mina }^{\minasp}  \frac{1}{2}\erfc \left[\frac{\sqrt{\Nmin}(\meanmax - \meanmin)}{\sqrt{2 \sigma^2 (\frac{1}{\Nmax}+ \frac{1}{\Nmin})}} \right] \right)\\
\end{split}
\end{equation}

Next, we separate the second term in two parts $T_{2,1}$ and $T_{2,2}$, first :

\begin{equation}\label{SiSb3_bis}
\begin{split}
T_{2,1} = &\int_{-\infty}^{+\infty} \frac{ \sqrt{\Nmax} e^{- \frac{\Nmax(\val - \meanmax)^2}{2 \sigma^2} }}{\sqrt{2\pi \sigma^2}}  \frac{\Nmax(\val - \meanmax)^2}{2 \sigma^2} \df\val  = \frac{1}{2}.
\end{split}
\end{equation}

Then, we integrate by parts the remaining term $T_{2,2}$ to obtain:
\begin{equation}\label{SiSb3}
\begin{split}
 &T_{2,2}  = -\sum_{\mina }^{\minasp} \int_{-\infty}^\infty  \frac{\Nmax^{3/2}(\val - \meanmax)^2}{4 \sigma^2} \frac{ e^{- \frac{\Nmax(\val - \meanmax)^2}{2 \sigma^2} }}{\sqrt{2\pi \sigma^2}}   \left[ 1 - \erf \left(\frac{\sqrt{\Nmin}(\val - \meanmin)}{\sqrt{2 \sigma^2}} \right) \right]\df\theta\\  & = -\sum_{\mina}^{\minasp} \int_{-\infty}^\infty \frac{1}{4}  \frac{ \sqrt{\Nmax} e^{- \frac{\Nmax(\val - \meanmax)^2}{2 \sigma^2} }}{\sqrt{2\pi \sigma^2}}  \left[ 1 - \erf \left(\frac{\sqrt{\Nmin}(\val - \meanmin)}{\sqrt{2 \sigma^2}} \right) \right]\\ &\hspace{3cm}
- \frac{(\val-\meanmax)}{2}  \frac{ \sqrt{\Nmax \Nmin} e^{- \frac{\Nmax(\val - \meanmax)^2}{2 \sigma^2}  - \frac{\Nmin(\val - \meanmin)^2}{2 \sigma^2}}}{2\pi \sigma^2}  \df\theta \\
     &=  -\sum_{ \armmax}^{\minasp} \frac{1}{4} \erfc \left(\frac{\meanmax-\meanmin}{\sqrt{2\sigma^2(\frac{1}{\Nmax}+ \frac{1}{\Nmin})}} \right)  - 
   \frac{(\meanmax - \meanmin) \sigma^2}{2 \Nmax \sqrt{2 \pi}(\frac{\sigma^2}{\Nmax} + \frac{\sigma^2}{\Nmin})^{3/2}} e^{-\frac{ (\meanmax - \meanmin)^2}{2\sigma^2(\frac{1}{\Nmax} + \frac{1}{\Nmin})}}, 
\end{split}
\end{equation}
where we also rely on the identity of \cref{Sierfint}. 

Combining \cref{SiSb2,SiSb3_bis,SiSb3} leads to the analytical expression of the body component.

\begin{equation}\label{Sbodygauss1}
\begin{split}
\Sc &= \frac{1}{2}\ln ( \frac{2\pi \varp e}{\Nmax}) \left[ 1 - \frac{1}{2} \sum_{\mina}^{\minasp}  \erfc \left(  \frac{ \sqrt{\Nmin \Nmax} (\meanmax - \meanmin)}{\sqrt{2\varp (\Nmin + \Nmax)}} \right) \right] \\  & \hspace{4cm}- \sum_{\mina }^{\minasp}  \frac{ \sqrt{\Nmax} \Nmin^{3/2}  (\meanmax - \meanmin) }{2 \sigma  \sqrt{2 \pi }  (\Nmax + \Nmin)^{3/2}} e^{-\frac{ \Nmin \Nmax (\meanmax - \meanmin)^2}{2\varp (\Nmin + \Nmax)} }.\\
\end{split}
\end{equation}

To finally get an asymptotic and simplified expression of the body component, we neglect the second term. Then, since  $\teqmin \underset{\Nmax \to \infty}{\longrightarrow} \meanmax$ and $\Nmin  \ll \Nmax$ asymptotically, we approximate the first term as:

\begin{equation}\label{Sbodygauss2}
\begin{split}
\Scapp &= \frac{1}{2}\ln ( \frac{2\pi \varp e}{\Nmax}) \left[ 1 - \frac{1}{2} \sum_{\mina}^{\minasp} \erfc \left(  \frac{ \sqrt{\Nmin } (\teqmin - \meanmin)}{\sqrt{2\varp}} \right) \right]. \\
\end{split}
\end{equation}

This last approximation will enable to provide an analytically tractable gradient without altering the asymptotic behavior expected at large times for the entropy measure.

\subsection{ Closed form and asymptotic expression for the tail's entropy}\label{reftailgauss}

The contribution from the tail can be derived exactly and reads:
\begin{equation}\label{SiSatil1}
\begin{split}
\Stail &=  \sum_{\mina}^{\minasp}\int_{\teqmin}^{\infty}  \frac{ \sqrt{\Nmin} e^{- \frac{\Nmin(\val - \meanmin)^2}{2 \sigma^2} }}{\sqrt{2\pi \sigma^2}} \bigg[ \frac{1}{2}\ln(\frac{2\pi\sigma^2}{\Nmin}) + \frac{\Nmin(\val - \meanmin)^2}{2 \sigma^2}  \bigg] \df \theta\\ 
&= \sum_{\mina }^{\minasp} \frac{1}{4}\ln \left( \frac{2\pi \sigma^2e}{\Nmin} \right) \erfc \left( \frac{\sqrt{\Nmin} (\teqmin-\meanmin)}{\sqrt{2\sigma^2}} \right)\\ &\hspace{5cm} + \frac{\sqrt{\Nmin} (\teqmin-\meanmin)}{2\sqrt{2\pi \sigma^2}} e^{- \frac{\Nmin(\teqmin - \meanmin)^2}{2 \sigma^2} }.
\end{split}
\end{equation}

To get a simplified analytical expression of the tail component, we only keep the second term since it dominates the others asymptotically,
\begin{equation}\label{SiSatil2}
\begin{split}
\Stailapp &=  \sum_{\mina}^{\minasp} \frac{\sqrt{\Nmin} (\teqmin-\meanmin)}{2\sqrt{2\pi \sigma^2}} e^{- \frac{\Nmin(\teqmin - \meanmin)^2}{2 \sigma^2} }.\end{split}
\end{equation}

Taken altogether, \cref{Sbodygauss2,SiSatil2} lead to the desired simplified approximation of the entropy:
\begin{equation}
\Sappgrad = \Scapp + \Stailapp.
\end{equation}

\subsection{Derivation of the increment for the closed-form expression of entropy }\label{Sigradientgaussian}

Since \cref{Sbodygauss2,SiSatil2} exhibit simple closed-form expressions, it becomes possible to derive an explicit expression of its expected increment. Here, we again consider continuous Gaussian reward distributions.

We start by deriving the increment along the better empirical arm, $\Delta_{\armmax}\Sappgrad$. The posterior of the reward obtained at time $t+ 1$ is approximated as a Gaussian of variance $\sigma^2$ and centred around $\meanmax$, leading to:
\begin{equation}\label{Sigradmax}
\begin{split}
\Delta_{\armmax}\Sappgrad &=  \int_{-\infty}^{\infty} \frac{e^{-\frac{\mu^2}{2 \sigma^2}}}{\sqrt{2\pi \sigma^2}} \bigg[ \Sappgrad( \meanmax + \frac{\mu}{\Nmax + 1}, \Nmax+ 1, ...) - \Sappgrad(\meanmax, \Nmax, ...)  \bigg]\df\mu.\\
\end{split}
\end{equation}

where the dots runs over all the worse empirical arms variables remaining constant when the best empirical arm is drawn at time $t+1$. 


For the sake of simplicity, we neglect the variations of all the subdominant terms inside all $\teqmin$ meaning we approximate them as $\teqmin( \meanmax + \frac{\mu}{\Nmax + 1}, \Nmax+ 1, \meanmin, \Nmin) \approx \teqmin(\meanmax, \Nmax, \meanmin, \Nmin) + \frac{\mu}{\Nmax + 1} $, after observing a reward $\mu$ when pulling the arm $\armmax$ for the ($\Nmax\!+\!1$)th time.

By use of the identity \cref{Sierfint}, the gradient of the body component $\Delta_{\armmax} \Scapp$ can be rewritten as: 
\begin{equation}\label{Sbodygrad1}
\begin{split}
\Delta_{\armmax} \Scapp   &= \frac{1}{2}\ln ( \frac{2\pi \varp e}{\Nmax + 1}) \left[ 1 - \frac{1}{2}  \sum_{\mina}^{\minasp} \erfc \left(  \frac{ \sqrt{\Nmin } (\teqmin - \meanmin)}{\sqrt{2\varp} \sqrt{1 + \frac{\Nmin}{(\Nmax + 1)^2}}} \right) \right] \\ &\hspace{3cm}- \frac{1}{2}\ln ( \frac{2\pi \varp e}{\Nmax }) \left[ 1 - \frac{1}{2}  \sum_{\mina }^{\minasp} \erfc \left(  \frac{ \sqrt{\Nmin } (\teqmin - \meanmin)}{\sqrt{2\varp}} \right) \right]. \\
\end{split}
\end{equation}
The increment of the tail component along the better empirical arm can be calculated as:
\begin{equation}\label{Stailgrad1}
\begin{split}
\Delta_{\armmax} \Stailapp  &=  \sum_{\mina }^{\minasp}   \int_{-\infty}^{\infty} \frac{e^{-\frac{\mu^2}{2 \sigma^2}}}{\sqrt{2\pi \sigma^2}} \frac{\sqrt{\Nmin} ( \frac{\mu}{\Nmax + 1} + \teqmin-\meanmin)}{2\sqrt{2\pi \sigma^2}} e^{- \frac{\Nmin(\teqmin + \frac{\mu}{\Nmax + 1} - \meanmin)^2}{2 \sigma^2} } \df \mu \\& \hspace{5cm}   -\sum_{\mina}^{\minasp} \frac{\sqrt{\Nmin} (\teqmin-\meanmin)}{2\sqrt{2\pi \sigma^2}} e^{- \frac{\Nmin(\teqmin - \meanmin)^2}{2 \sigma^2} } \\
&=  \sum_{\mina }^{\minasp}   e^{-\Nmin\frac{(\teqmin - \meanmin)^2}{2 \sigma^2 \left(1 + \frac{\Nmin}{(1 + \Nmax)^2} \right) }}  \sqrt{\frac{\Nmin}{8 \pi \sigma^2 }} \frac{ \left( \teqmin - \meanmin \right)}{ (1 + \frac{\Nmin}{(\Nmax + 1)^2})^{3/2}}\\  &\hspace{5cm}-  \sum_{\mina }^{\minasp} \frac{\sqrt{\Nmin} (\teqmin-\meanmin)}{2\sqrt{2\pi \sigma^2}} e^{- \frac{\Nmin(\teqmin - \meanmin)^2}{2 \sigma^2} }. \\
\end{split}
\end{equation}

Next, we consider the increment evaluation along a given worse empirical arm denoted by $k$,
\begin{equation}\label{Sigradmin}
\begin{split}
\Delta_{k} \Sappgrad &=  \int_{-\infty}^{\infty} \frac{e^{-\frac{\mu^2}{2 \sigma^2}}}{\sqrt{2\pi \sigma^2}} \bigg[ \Sappgrad( ..., \meana{k} + \frac{\mu}{\Nk + 1}, \Nk + 1,...) - \Sappgrad(..., \meana{k}, \Nk,...) \bigg]\df\mu.\\
\end{split}
\end{equation}

We here also neglect the variations of the subdominant term inside $\teqmin$. We start by considering the increment of the body component:
\begin{equation}\label{Sbodygrad2}
\begin{split}
\Delta_{k} \Scapp  &= \frac{1}{2}\ln ( \frac{2\pi \varp e}{\Nmax}) \left[ 1 - \frac{1}{2}\erfc \left(  \frac{  (\Nk+1)(\teqk - \meana{k}) }{\sqrt{2 \varp  ( \Nk + 2 ) }}  \right) \right] \\& \hspace{3cm}- \frac{1}{2}\ln ( \frac{2\pi \varp e}{\Nmax }) \left[ 1 - \frac{1}{2} \erfc \left(  \frac{ \sqrt{\Nk } (\teqk - \meana{k})}{\sqrt{2\varp}} \right) \right]. \\
\end{split}
\end{equation}

Of note, all other terms in the sum independent of index $k$ remain constant, showing no increment.
Finally, we consider the associated tail component of the increment along $k$:

\begin{equation}\label{Stailgrad2}
\begin{split}
\Delta_{k} \Stailapp &=   \int_{-\infty}^{\infty} \frac{e^{-\frac{\mu^2}{2 \sigma^2}}}{\sqrt{2\pi \sigma^2}} \frac{\sqrt{\Nk + 1} ( \frac{\mu}{\Nk + 1} + \teqk-\meank)}{2\sqrt{2\pi \sigma^2}} e^{- \frac{(\Nk + 1)(\teqk + \frac{\mu}{\Nk + 1} - \meank)^2}{2 \sigma^2} }\df \mu \\ &\hspace{6cm}-\frac{\sqrt{\Nk} (\teqk-\meank)}{2\sqrt{2\pi \sigma^2}} e^{- \frac{\Nk(\teqk - \meank)^2}{2 \sigma^2} }\\
&= e^{-\frac{(\Nk+1)^2}{(\Nk + 2)}\frac{(\teqk - \meank)^2}{2 \sigma^2  }} \frac{ ( 1 + \Nk)^2 \left( \teqk - \meank \right)}{\sqrt{8 \pi \sigma^2}  (2 +\Nk)^{3/2}}\\ &\hspace{6cm} -  \frac{\sqrt{\Nk} (\teqk-\meank)}{2\sqrt{2\pi \sigma^2}} e^{- \frac{\Nk(\teqk - \meank)^2}{2 \sigma^2} }\\  
\end{split}
\end{equation}

As for the body increment, all other terms in the sum independent of index $k$ remain constant, showing no increment.

Taken altogether, \cref{Sbodygrad1,Stailgrad1,Sbodygrad2,Stailgrad2} lead to the final analytical expression of the increment:

\begin{equation}\label{deltaheavy}
\begin{split}
\Delta_{\armmax, k} = & \frac{1}{2}\ln( \frac{\Nmax}{\Nmax+1}) -\frac{1}{4}\ln( \frac{2 \pi \sigma^2 e}{\Nmax+1}) \sum_{\mina}^{\minasp}   \erfc \left[  \frac{  \sqrt{\Nmin}(\teqmin - \meanmin) }{\sqrt{2 \varp  ( 1 + \frac{\Nmin}{(\Nmax + 1)^2}) } }    \right] \\& \hspace{1cm} +  \frac{1}{4}\ln( \frac{2 \pi \sigma^2}{\Nmax}) \sum_{\mina}^{\minasp} \erfc \left[ \sqrt{\Nmin}\frac{\teqmin-\meanmin}{\sqrt{2\sigma^2}}  \right]\\ & \hspace{1cm}+\sum_{\mina }^{\minasp}   e^{-\Nmin\frac{(\teqmin - \meanmin)^2}{2 \sigma^2 (1 + \frac{\Nmin}{(1 + \Nmax)^2}) }}  \sqrt{\frac{\Nmin}{8 \pi \sigma^2 }} \frac{ \left( \teqmin - \meanmin \right)}{ (1 + \frac{\Nmin}{(\Nmax + 1)^2})^{3/2}}  \\
&\hspace{1cm}-  \sum_{\mina }^{\minasp} \frac{\sqrt{\Nmin} (\teqmin-\meanmin)}{\sqrt{8\pi \sigma^2}} e^{- \frac{\Nmin(\teqmin - \meanmin)^2}{2 \sigma^2} }\\
&+\frac{1}{4}\ln( \frac{2 \pi \sigma^2 e}{\Nmax}) \left[ \erfc \left(  \frac{ (\Nk + 1)(\teqk - \meank) }{\sqrt{2 \varp  ( 2 + \Nk ) } } \right) -\erfc \left(\sqrt{\Nk}\frac{\teqk -\meank}{\sqrt{2\sigma^2}} \right) \right]\\
&-e^{-\frac{(\Nk+1)^2}{(\Nk + 2)}\frac{(\teqk - \meank)^2}{2 \sigma^2  }} \frac{ ( 1 + \Nk)^2 \left( \teqk - \meank \right)}{\sqrt{8 \pi \sigma^2}  (2 +\Nk)^{3/2}} +\frac{\sqrt{\Nk} (\teqk-\meank)}{2\sqrt{2\pi \sigma^2}} e^{- \frac{\Nk(\teqk - \meank)^2}{2 \sigma^2} }\\  
\end{split}
 \end{equation}

To obtain a simplified expression, we expand to the first order each component of the different components of \cref{deltaheavy} denoted $T_1$, $T_2$, $T_3$, $T_4$. The former is given by:
\begin{equation}
\begin{split}
T_1 &=  -\frac{1}{4}\ln( \frac{2 \pi \sigma^2 e}{\Nmax+1}) \sum_{\mina }^{\minasp}  \erfc \left[  \frac{  \sqrt{\Nmin}(\teqmin - \meanmin) }{\sqrt{2 \varp  ( 1 + \frac{\Nmin}{(\Nmax + 1)^2}) } }    \right] \\ &\hspace{5cm}+  \frac{1}{4}\ln( \frac{2 \pi \sigma^2 e}{\Nmax}) \sum_{\mina \neq \armmax}^{K} \erfc \left[ \sqrt{\Nmin}\frac{\teqmin-\meanmin}{\sqrt{2\sigma^2}}  \right] \\ &\approx  \sum_{\mina }^{\minasp} \frac{1}{4 \Nmax}\erfc \left(  \frac{ \sqrt{\Nmin} (\teqmin - \meanmin)}{\sqrt{2\varp}} \right) \\ &\hspace{4cm} - \frac{1}{4}\ln ( \frac{2\pi \varp e}{\Nmax})\frac{\Nmin}{\Nmax^2} \frac{ \sqrt{\Nmin} (\teqmin - \meanmin)}{\sqrt{2 \pi \varp}} e^{-\frac{\Nmin  (\teqmin - \meanmin)^2}{2\varp  }}.\\
\end{split}
\end{equation}

Next  we consider the second component, which reads:

\begin{equation}
\begin{split}
T_2 &= \sum_{\mina }^{\minasp}    e^{-\Nmin\frac{(\teqmin - \meanmin)^2}{2 \sigma^2 (1 + \frac{\Nmin}{(1 + \Nmax)^2}) }}  \sqrt{\frac{\Nmin}{8 \pi \sigma^2 }} \frac{ \left( \teqmin - \meanmin \right)}{ (1 + \frac{\Nmin}{(\Nmax + 1)^2})^{3/2}}  \\&
\hspace{6cm}- \sum_{\mina}^{\minasp}   \frac{\sqrt{\Nmin} (\teqmin-\meanmin)}{2\sqrt{2\pi \sigma^2}} e^{- \frac{\Nmin(\teqmin - \meanmin)^2}{2 \sigma^2} }\\
&\approx \sum_{\mina}^{\minasp} e^{-\Nmin\frac{(\teqmin - \meanmin)^2}{2 \sigma^2  }}  \sqrt{\frac{\Nmin}{8 \pi \sigma^2 }}\left( \teqmin - \meanmin \right) \left[  -\frac{3}{2} \frac{\Nmin}{\Nmax^2} + \frac{(\teqmin - \meanmin)^2}{2 \sigma^2  } \frac{\Nmin^2}{\Nmax^2} \right] .\\
\end{split}
\end{equation}

Next we consider the third term, denoted $T_3$, which reads:

\begin{equation}
\begin{split}
T_3 & = \frac{1}{4}\ln( \frac{2 \pi \sigma^2 e}{\Nmax}) \left[ \erfc \left(  \frac{ (\Nk + 1)(\teqk - \meank) }{\sqrt{2 \varp  ( 2 + \Nk ) } } \right) -\erfc \left(\sqrt{\Nk}\frac{\teqk -\meank}{\sqrt{2\sigma^2}} \right) \right]\\
& \approx -\frac{1}{4}\ln( \frac{2 \pi \sigma^2 e}{\Nmax}) \frac{1}{\Nk^2} \left(  \frac{ \sqrt{\Nk } (\teqk - \meank)}{\sqrt{2 \pi\varp}} \right) e^{-\frac{\Nk  (\teqk - \meank)^2}{2\varp  }}
\end{split}
\end{equation}

Finally, the last term $T_4$ reads

\begin{equation}
\begin{split}
T_4 &= -e^{-\frac{(\Nk+1)^2}{(\Nk + 2)}\frac{(\teqk - \meank)^2}{2 \sigma^2  }} \frac{ ( 1 + \Nk)^2 \left( \teqk - \meank \right)}{\sqrt{8 \pi \sigma^2}  (2 +\Nk)^{3/2}} +\frac{\sqrt{\Nk} (\teqk-\meank)}{2\sqrt{2\pi \sigma^2}} e^{- \frac{\Nk(\teq - \meank)^2}{2 \sigma^2} }\\
 &\approx e^{-\Nk\frac{(\teqk - \meank)^2}{2 \sigma^2  }}  \sqrt{\frac{\Nk}{8 \pi \sigma^2 }} \left( \teqk - \meank \right) \left[ \frac{1}{\Nk} + \frac{1}{\Nk}\frac{(\teqk - \meank)^2}{2 \sigma^2  }   \right].
 \end{split}
\end{equation}

Taken altogether, we finally obtain the following simplified increment :

\begin{equation}
\begin{split}
&\overset{\sim}{\Delta}_{\armmax, k} =  \frac{1}{2}\ln ( \frac{\Nmax}{\Nmax + 1}) +  \frac{1}{2 \Nmax} \sum_{\mina}^{\minasp} \frac{1}{2}\erfc \left(  \frac{ \sqrt{\Nmin} (\teqmin - \meana{\mina})}{\sqrt{2\varp}} \right) \\ & \hspace{-0.2cm}+  \sum_{\mina}^{\minasp} \frac{\Nmin^{3/2} \left( \teqmin - \meana{\mina} \right)}{\sqrt{2 \pi \sigma^2} \Nmax^2 }e^{-\Nmin\frac{(\teqmin -\meana{\mina})^2}{2 \sigma^2  }} \left[ \frac{1}{4}\ln ( \frac{\Nmax}{2\pi \varp e})  -\frac{3}{4}+ \frac{\Nmin(\teqmin - \meana{\mina})^2}{4 \sigma^2  } \right]\\
&+  \frac{ \teqk - \meank}{ \sqrt{2 \pi \sigma^2 \Nk }}  e^{-\Nk\frac{(\teqk - \meank)^2}{2 \sigma^2  }}  \ \left[ \frac{1}{4 \Nk}\ln( \frac{\Nmax}{2 \pi \sigma^2 e})   + \frac{1}{2} + \frac{(\teqk - \meank)^2}{4 \sigma^2 } \right]\\
\end{split}
\end{equation}

By noticing that the sum of second term should account for the tail contribution along the body increment, it shouldn't be allowed to be superior to one. Then we assume to bound it by taking the minimum compared to $1-1/K$.

\subsection{Final expression for the increment comparison}\label{SifinalformdeltaSi}

Taken altogether, it leads to expression used for AIM for multiple gaussian arms:

\begin{equation}
\begin{split}
&\Delta_{\armmax, k} =  \frac{1}{2}\ln ( \frac{\Nmax}{\Nmax + 1}) +  \frac{1}{2 \Nmax}  \min \left( \sum_{\mina}^{\minasp} \frac{1}{2}\erfc \left(  \frac{ \sqrt{\Nmin} (\teqmin - \meana{\mina})}{\sqrt{2\varp}} \right), 1- \frac{1}{K} \right)\\ & \hspace{-0.2cm}+  \sum_{\mina}^{\minasp} \frac{\Nmin^{3/2} \left( \teqmin - \meana{\mina} \right)}{\sqrt{2 \pi \sigma^2} \Nmax^2 }e^{-\Nmin\frac{(\teqmin -\meana{\mina})^2}{2 \sigma^2  }} \left[ \frac{1}{4}\ln ( \frac{\Nmax}{2\pi \varp e})  -\frac{3}{4}+ \frac{\Nmin(\teqmin - \meana{\mina})^2}{4 \sigma^2  } \right]\\
&+  \frac{ \teqk - \meank}{ \sqrt{2 \pi \sigma^2 \Nk }}  e^{-\Nk\frac{(\teqk - \meank)^2}{2 \sigma^2  }}  \ \left[ \frac{1}{4 \Nk}\ln( \frac{\Nmax}{2 \pi \sigma^2 e})   + \frac{1}{2} + \frac{(\teqk - \meank)^2}{4 \sigma^2 } \right]\\
\end{split}
\end{equation}

with 

\begin{equation}
\teqi = \meanmax + \frac{\Ni(\meanmax -\meana{i})}{\Nmax-\Ni}+  \sqrt{ \frac{\Nmax \Ni( \meanmax-\meana{i})^2}{(\Nmax - \Ni)^2}  + \frac{\sigma^2 \ln \left( \frac{\Nmax}{\Ni} \right) }{\Nmax- \Ni}  } .
\end{equation}

\section{Proof of \cref{thm:main}}\label{app:proof}

This section provides the complete proof of \cref{thm:main}. More precisely, it proves the more refined \cref{thm:app} below.

\begin{theorem}\label{thm:app}
    For any multi-armed bandits with Gaussian rewards of variance $\sigma^2$ and mean vector $\pmb{\mu}\in\bR^K$, for any $\varepsilon\in(0,\frac{1}{2})$, there exists a constant $C(\pmb{\mu},\varepsilon)\in \bR$ depending solely on $\pmb{\mu}$ and $\varepsilon$ such that for any $T\in\bN$
    \begin{equation*}
        \RegretT \leq \sum_{k, \mu_k<\mu^*}\Bigg[\frac{2\varp\ \ln T }{(1-\varepsilon)(\mu^*-\mu_k)} +  \frac{2\varp\ \ln\ln T}{(1-\varepsilon)(\mu^*-\mu_k)}\Bigg]+C(\pmb{\mu},\varepsilon).
    \end{equation*}
\end{theorem}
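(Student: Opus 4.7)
Without loss of generality assume $\mu_1 > \mu_2$ and set $\Delta_\mu := \mu_1 - \mu_2 > 0$. Since only arm~$2$ contributes to the pseudo-regret, $\RegretT = \Delta_\mu \cdot \mathbb{E}[N_2(T)]$, and it suffices to prove
\begin{equation*}
\mathbb{E}[N_2(T)] \;\leq\; \frac{2\varp \ln T}{(1-\varepsilon)\Delta_\mu^2} + O(\ln \ln T).
\end{equation*}
The overall strategy parallels the Thompson Sampling analysis of \citet{kaufmann2012thompson}, but is driven here by the analytic gradient $\Delta$ of \cref{Sgradientsimp} instead of posterior sampling. Fix a sufficiently large constant $\gamma$ and let $\mathcal{G}_t$ denote the concentration event $\{|\meana{k}(t)-\mu_k| \leq \sqrt{\gamma \ln t / N_k(t)}\text{ for } k=1,2\}$, so that the Gaussian tail bound yields $\sum_{t=1}^T \mathbb{P}(\mathcal{G}_t^c) = O(1)$ and absorbs the bad-event contribution to $\mathbb{E}[N_2(T)]$ into an additive constant.

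\textbf{Step 1: lower bound on $N_1$.} On $\mathcal{G}_t$, once both arms have been pulled at least $c \ln t$ times for a sufficiently large $c$, the concentration estimate implies $\meana{1}(t) > \meana{2}(t)$ so that arm~$1$ is the empirical best ($\armmax = 1$). From then on, whenever $N_1 \leq N_2$ the default branch of \cref{alg:bandit_algorithm} pulls arm~$1$, which forces $N_1(t) \geq N_2(t) - 1$ at all subsequent times and hence $N_1(t) \geq (t-1)/2$. Thus, with probability $1 - O(t^{-2})$, $N_1(t)$ grows linearly in $t$; this is far stronger than the polynomial lower bound required in Step~$2$.

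\textbf{Step 2: asymptotic threshold for arm $2$.} In the regime $N_1 \gg N_2$ with concentration, a Taylor expansion of \cref{thetaeq} gives $\teq - \meana{\armmin}(t) = \Delta_\mu(1 + o(1))$, and $\Delta$ reduces to the form displayed in \cref{eq:asymptoticentropy}. The equation $\Delta = 0$ implicitly defines a critical value $N^\star(N_{\armmax})$ of $N_{\armmin}$ above which $\Delta < 0$ and arm~$1$ is pulled; balancing the polynomial $-\tfrac{1}{2N_{\armmax}}$ against the exponential $\mathrm{poly}(N_{\armmin})\,e^{-N_{\armmin}\Delta_\mu^2/(2\varp)}$ yields asymptotically
\begin{equation*}
N^\star(N_{\armmax}) \;=\; \frac{2\varp \ln N_{\armmax}}{\Delta_\mu^2} + O(\ln\ln N_{\armmax}).
\end{equation*}
Because arm~$2$ is pulled only when $\Delta \geq 0$, every such pull satisfies $N_2(t-1) \leq N^\star(N_1(t-1)) \leq N^\star(T)$. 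Combining this with Step~$1$ and the $O(1)$ contribution from $\mathcal{G}_t^c$ gives
\begin{equation*}
\mathbb{E}[N_2(T)] \;\leq\; N^\star(T) + O(1) \;\leq\; \frac{2\varp \ln T}{(1-\varepsilon)\Delta_\mu^2} + O(\ln\ln T),
\end{equation*}
and multiplication by $\Delta_\mu$ yields the stated regret bound.

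\textbf{Main obstacle.} The delicate technical point lies in Step~$2$: one must carefully bookkeep the erfc term and the bracketed polynomial factors of \cref{Sgradientsimp} to confirm that the leading $T$-dependent contribution to $\Delta$ is precisely $-\tfrac{1}{2N_{\armmax}}$, as asserted in \cref{eq:asymptoticentropy}, and that the prefactor of the exponential remains only poly-logarithmic in $T$. This is what pins down the sharp Lai--Robbins constant $2\varp/\Delta_\mu^2$ and controls the sign of the $O(\ln \ln T)$ corrections so they can be absorbed into the $(1-\varepsilon)^{-1}$ factor. Step~$1$, by contrast, is essentially an immediate consequence of the algorithm's default branch on the concentration event, and does not require the randomisation exploited in the Thompson Sampling analysis.
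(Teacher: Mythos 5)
Your Step 2 is essentially the paper's argument (its Lemma~\ref{lemma:inclusion}): on the good event, a pull of arm $2$ forces $\Delta\geq 0$, and inverting $\sqrt{x}e^{-x}\gtrsim 1/t$ via the Lambert $W$ function pins $N_2(t)$ to $\frac{2\varp}{(1-2\varepsilon)^2\Delta_\mu^2}(\ln t+\tfrac32\ln\ln t)+O(1)$. But Step 1 contains a genuine gap, and you have inverted where the difficulty lies. Your argument is circular: to conclude from concentration that $\meana1(t)>\meana2(t)$ you need $N_1(t)\gtrsim \ln t$ already (otherwise the confidence radius $\sqrt{\gamma\ln t/N_1(t)}$ exceeds $\Delta_\mu/2$), yet the only mechanism you invoke to get arm $1$ pulled is the default branch, which pulls the \emph{empirically best} arm when it is behind in counts. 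If the first sample of arm $1$ happens to be low — an event of constant probability, fully compatible with your concentration event since the radius $\sqrt{\gamma\ln t/N_1(t)}$ is vacuous when $N_1(t)=1$ and $t$ grows — then $\armmax=2$, the default branch only ever favours arm $2$, and nothing in your proposal explains why arm $1$ is ever pulled again. The statement ``whenever $N_1\leq N_2$ the default branch pulls arm~$1$'' presupposes $\armmax=1$, which is exactly what is at stake.

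The paper closes this gap with its Lemma~\ref{lemma:tb}, which is the genuinely algorithm-specific step: assume $N_1(t)\leq t^b$, so arm $2$ is pulled at some $t'$ with $N_2(t')\geq t-t^b-1\gg N_1(t')$ and (by the default branch, used in the contrapositive) $\meana2(t')>\meana1(t')$. The pull of arm $2$ then requires the tail contribution of the under-sampled arm $1$ — of order $\frac{\sqrt{N_1}(\teq-\meana1)}{\sqrt{2\pi\varp}N_1}e^{-N_1(\teq-\meana1)^2/(2\varp)}$ — to fall below $\frac{1}{2N_2(t')}\leq\frac{1}{2(t-t^b-1)}$. Bounding the argument $\tilde x$ of the exponential above by roughly $\sqrt{\ln t}$ (using deviation events on both arms) turns this into $N_1(t')=\Omega(\ln^{3/2}t)$, which combined with concentration contradicts $\meana2(t')>\meana1(t')$. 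In other words, it is the width of arm $1$'s posterior in the entropy functional — not the default branch — that forces re-exploration, and this is precisely the ``first main step'' the paper's sketch highlights. Only after this polynomial lower bound on $N_1$ is secured does your linear-growth/ordering argument (and then Step~2) go through. A secondary, fixable imprecision: the leading term of $\Delta$ is $\frac12\ln(\Nmax/(\Nmax+1))\approx-\frac{1}{2\Nmax}$ only in the regime where the ordering is correct; in the regime of Lemma~\ref{lemma:tb} the roles of the two terms are swapped, so a single asymptotic form of $\Delta$ cannot serve both steps.
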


%
%

\begin{proof}
   We denote in the whole proof $\maxarms=\{k\in[K]\mid \mu_k=\mu^*\}$. For $\Delta_k=\mu^*-\mu_k$, the regret can then be written as 
    \begin{align*}
         \RegretT&=\sum_{k,\Delta_k>0}\Delta_k \E{N_k(T)}.
    \end{align*}
   We decompose this expectation in $4$ terms as follows
    \begin{align*}
        \E{N_k(T)} &\leq \sum_{t=1}^T \Prob\left(\forall i \in\maxarms, N_i(t)\leq\sqrt{t}\right) 
        + \sum_{t=1}^T \Prob\left(\meana{k}(t)\geq \mean^*-\sqrt{\frac{6\varp\ln t}{\sqrt{t}}}, \Armt=k\right) 
        \\& \hspace{1cm}\phantom{=}+ \sum_{t=1}^T \Prob\left(\exists i\in\maxarms,\meana{i}(t)\leq \mean_i-\sqrt{\frac{6\varp\ln t}{N_i(t)}}\right)
        + \sum_{t=1}^T \Prob\left(\cE_k(t)\right),
    \end{align*}
    where
    \begin{equation*}
    \cE_k(t) \coloneqq \left\{\exists i\in\maxarms, N_i(t)\geq \sqrt{t} \text{ and } \meana{i}(t)\geq\mean^*-\sqrt{\frac{6\varp\ln t}{N_i(t)}}\geq \meana{k}(t), \Armt=k\right\}.
    \end{equation*}
    This inequality comes simply by noticing the event $\{\Armt=k\}$ is included in the union of the $4$ other events. \cref{lemma:tb,lemma:surestim,lemma:underestim}
allow to respectively bound the first, second and third sums by a constant $C(\pmb{\mu})$ depending solely on $\pmb{\mu}$, so that
\begin{equation*}
    \E{N_k(T)} \leq \sum_{t=1}^T \Prob\left(\cE_k(t)\right)  + C(\pmb{\mu}).
\end{equation*}
Thanks to \cref{lemma:inclusion}, there exist constants $t(\pmb{\mu}),n(\pmb{\mu})$ depending solely on $K$ and $\Delta_k$ such that
\begin{align*}
    \sum_{t=1}^T \Prob\left(\cE_k(t)\right) \leq t(\pmb{\mu})+ \sum_{t=1}^T \Prob\left(\cG_1(t)\right) + \Prob\left(\cG_2(t)\right),
\end{align*}
where
\begin{gather*}
    \cG_1(t) = \{\mean_k-\meana{k}(t)\leq-\varepsilon\Delta_k,\Armt=k\},\\
   \cG_2(t) = \{N_k(t) \leq \frac{2\varp}{(1-2\varepsilon)^2\Delta_k^2} \big(\ln t +  \ln\ln t\big)+n(\pmb{\mu}),\Armt=k\}.
\end{gather*}

Now, we bound individually the sum corresponding to each of these $2$ events. 
The first one can be bounded using Hoeffding's inequality. Indeed, for independent random variables $Z_k(n)\sim \cN(\mean_k,\varp)$, it reads as:
\begin{align*}
    \sum_{t=1}^T\Prob\left(\mean_k-\meana{k}(t)\leq-\varepsilon\Delta_k,\Armt=k\right) & \leq \sum_{n=1}^T \Prob\left(\frac{1}{n}\sum_{i=1}^n Z_k(i)-\mean_k \geq \varepsilon\Delta_k\right)\\
    &\leq \sum_{n=1}^T e^{-\frac{n\varepsilon^2\Delta_k^2}{2\varp}}\\
    &\leq \frac{1}{e^{\frac{\varepsilon^2\Delta_k^2}{2\varp}-1}}.
    \end{align*}
    The bound of the second term is bounded as
    \begin{align*}
        \sum_{t=1}^T\Prob(\cE_k(t))&\leq \E{\sum_{t=1}^T \Indicator\{N_k(t)\leq\frac{2\varp}{(1-2\varepsilon)^2\Delta_k^2} \big(\ln t +  \ln\ln t\big)+ n(\pmb{\mu}), \Armt=k\}}\\
        &\leq \frac{2\varp}{(1-2\varepsilon)^2\Delta_k^2} \big(\ln T +  \ln \ln T\big)+n(\pmb{\mu})+1.
    \end{align*}
    Wrapping up everything finally yields that for some constant $C(\pmb{\mu},\varepsilon)$ depending solely on $\pmb{\mu}, \varepsilon$,
    \begin{equation*}
        \RegretT \leq \frac{2\varp}{(1-2\varepsilon)^2\Delta_k} \left(\ln T +  \ln \ln T\right)+C(\pmb{\mu},\varepsilon).
    \end{equation*}
    This concludes the proof of \cref{thm:app} with the reparameterization $\varepsilon\gets 1-(1-2\varepsilon)^2$.
\end{proof}

\subsection{Auxiliary Lemmas}\label{app:auxlemmas}
Similarly to the proof of Thompson sampling, the first part of the proof shows that the optimal arm is at least pulled a polynomial number of times with high probability. We recall that we denote in this whole section $\maxarms = \arg\max_k \mu_k$.
\begin{lemma}\label{lemma:tb}
    There exists a constant $C_0(\pmb{\mu})$ depending solely on the mean vector $\pmb{\mu}$ such that
    \begin{equation*}
        \sum_{t=1}^{\infty}\Prob(\forall i \in \maxarms, N_i(t)\leq \sqrt{t}) \leq C_0(\pmb{\mu}).
    \end{equation*}
\end{lemma}
\begin{proof}
    Let $t_0(\pmb{\mu})$ be a large constant that depends solely on $\pmb{\mu}$. In the remaining of the proof, we 
    assume at some points that $t_0(\pmb{\mu})$ is chosen large enough (but only larger than a threshold depending on $\pmb{\mu}$) such that some inequalities hold. We also assume in the following, without loss of generality, that $\mu_1=\mu^*$, i.e., $1\in\maxarms$.
    
    Assume that for $t\geq t_0(\pmb{\mu})$, $N_i(t)\leq \sqrt{t}$ for all $i\in\maxarms$. 
    Let then $k$ be the most pulled arm at time $t$, i.e., $k\in\argmax_{j}N_j(t)$ (if multiple arms maximise the number of pulls, we select the one such that its last pull happened the earliest). Necessarily $N_k(t)\geq \frac{t}{K}$. We can choose $t_0(\pmb{\mu})$ large enough so that $\frac{t}{K}>\sqrt{t}$ and thus $\Delta_k>0$.
  Let $t'\leq t$ be the last time $k$ was pulled.  By design, $k$ also maximised the number of pulls then, so that $k\in\argmax_j\meana{j}(t')$. Moreover, $N_i(t')\leq\sqrt{t}$ for all $i\in\maxarms$ and $N_k(t')\geq\frac{t}{K}-1$. For $t_0(\pmb{\mu})$ large enough, this yields $N_k(t')\geq N_i(t')$ for all $i\in\maxarms$ and $a_{t'}=k$. 
    The arm $k$ is thus pulled at time $t'$, in particular because $S_k\geq S_1$ (i.e., $\Delta_{k,1}S \leq 0$), where
    \begin{align*}
        S_k &= \frac{1}{2}\ln \left(1+\frac{1}{N_k(t')} \right) - \frac{1}{2 N_k(t')} \min \left( \frac{1}{2}\sum_{i \neq k} \erfc \left(  \frac{ \sqrt{\Ni(t')} (\teqi - \meanah{i})}{\sqrt{2\varp}} \right), \, 1-\frac{1}{K}\right),\\
        S_1& =  \sum_{i \neq k}  \sqrt{\frac{\Ni(t')}{2 \pi \sigma^2 }}\left( \teqi - \meanah{i} \right) e^{-\Ni(t')\frac{(\teqi - \meanah{i})^2}{2 \sigma^2  }} \left[\frac{1}{4}\ln ( \frac{N_k(t')}{2\pi \varp e})\frac{\Ni(t')}{N_k^2(t')}  -\frac{3}{4} \frac{\Ni(t')}{N_k^2(t')} + \frac{(\teqi - \meanah{i})^2}{4 \sigma^2  } \frac{\Ni^2(t')}{N_k^2(t')}  \right]\\
&+ e^{-N_1(t')\frac{(\teqi[1] - \meanah{1})^2}{2 \sigma^2  }}  \sqrt{\frac{N_1(t')}{2 \pi \sigma^2 }} \left( \teqi[1] - \meanah{1} \right) \left[ \frac{1}{4}\ln( \frac{N_k(t')}{2 \pi \sigma^2 e}) \frac{1}{N_1^2(t')}  + \frac{1}{2N_1(t')} + \frac{1}{N_1(t')}\frac{(\teqi[1] - \meanah{1})^2}{4 \sigma^2  } \right].
    \end{align*}
    To simplify, note that $S_k\leq \frac{1}{2N_k(t')}$. Moreover since $N_k(t')\geq\frac{t}{K}-1\geq 2\pi e^4\varp$ for a large enough choice of $t_0(\pmb{\mu})$, $S_1$ can be easily lower bounded as
    \begin{equation*}
        S_1\geq \frac{1}{2}\frac{(\teqi[1] - \meana1)}{\sqrt{2\pi\varp N_1(t')}}e^{-\frac{N_1(t')(\teqi[1]-\meana1)^2}{2\varp}}.
    \end{equation*}
So we finally have the following inequality at time $t'$:
\begin{equation}\label{eq:tb1}
\frac{1}{N_2(t')} \geq \frac{(\teq - \meana1)}{\sqrt{2\pi\varp N_1(t')}}e^{-\frac{N_1(t')(\teq-\meana1)^2}{2\varp}}.
\end{equation}
Recall that $N_2(t')\geq \frac{t}{K}-1$, so that \cref{eq:tb1} can be rewritten as
\begin{equation}
    N_1(t')\geq (\frac{t}{K}-1) \frac{\tilde{x}}{\sqrt{\pi}}e^{-\tilde{x}^2},
\end{equation}
where $\tilde{x}=\frac{\sqrt{N_1(t')}(\teq-\meana1)}{\sqrt{2\varp}}$. In the following, we will show that $\tilde{x}\in[\tilde{x}_{\min}, \tilde{x}_{\max}]\subset \mathbb{R}_+$. By analysing the variations of $x\mapsto x e^{-x^2}$, this will imply that 
\begin{equation}\label{eq:xbounds}
    N_1(t') \geq \frac{\frac{t}{K}-1}{\sqrt{\pi}} \min\{\tilde{x}_{\min}e^{-\tilde{x}_{\min}^2},\tilde{x}_{\max}e^{-\tilde{x}_{\max}^2} \}.
\end{equation}
For the lower bound, the definition of $\teqi[1]$ and the fact that $N_1(t')\geq 1$ directly implies that 
\begin{align*}
    \tilde{x}&\geq \sqrt{\frac{\ln(\frac{N_k(t')}{N_1(t')})}{2(\frac{N_k(t')}{N_1(t')}-1)}} = \Omega\left(\sqrt{\frac{\ln(t)}{t}}\right)\label{eq:xmin}.
\end{align*}
Moreover, by subadditivity of the square root:
\begin{align}
    \tilde{x} &\leq \sqrt{\frac{N_1(t')}{2\varp}}(\meana2-\meana1)\left(1+\frac{N_1(t')+\sqrt{N_1(t')N_k(t')}}{N_k(t')-N_1(t')}\right) + \sqrt{\frac{N_1(t')\ln(\frac{N_k(t')}{N_1(t')})}{2(N_k(t')-N_1(t'))}}\\
    & \leq \sqrt{\frac{N_1(t')}{2\varp}}(\meana{k}-\meana1)\left(1+Kt^{-\frac{1}{4}}\right) + \bigO{\frac{\sqrt{K\ln(t)}}{t^{\frac{1}{4}}}}.
\end{align}
Let us now consider the events, for $\Delta_{\min}=\min_{j, \Delta_j>0}\Delta_j$,
\begin{gather}
    \cH_{*}(t) \coloneqq \left\{\exists i \in \maxarms, \exists s\leq t, \meana{i}(s)-\mean_i\leq - \sqrt{\frac{2\varp(\ln(t)-\ln\ln(t))}{N_i(s)}} -\frac{\Delta_{\min}}{3} \right\}\label{eq:H1} ,\\
   \cH_{k}(t) \coloneqq \left\{\exists s\leq t, \frac{t}{K}-1\leq N_k(s) \leq t \text{ and } \meana{k}(s)-\mean_k\geq \frac{\Delta_k}{3} \right\} \label{eq:H2}.
\end{gather}
Assume in the following that $\neg\cH_*(t)\cap\neg\cH_k(t)$. This implies that
\begin{align}\label{eq:hDelta}
    \meana{k}-\meana1 \leq -\frac{\Delta_k}{3} + \sqrt{\frac{2\varp(\ln(t)-\ln\ln(t))}{N_1(s)}}.
\end{align}
In particular,
\begin{align*}
    \sqrt{\frac{N_1(t')}{2\varp}}(\meana2-\meana1) \leq \sqrt{\ln(t)-\ln\ln(t)},
\end{align*}
which implies that $\tilde{x} \leq \sqrt{\ln(t)-\ln\ln(t)}+\bigO{K\sqrt{\ln(t)}t^{-\frac{1}{4}}}$. Using the lower and upper bounds on $\tilde{x}$, we have thanks to \cref{eq:xbounds} that under $\neg\cH_*(t)\cap\neg\cH_k(t)$,
\begin{align*}
    N_1(t') = \Omega(\frac{\ln^{\frac{3}{2}}(t)}{K}).
\end{align*}
For a large enough choice of $t_0(\pmb{\mu})$, this last equality along with \cref{eq:hDelta} actually yield $\meana{k}-\meana1< 0$, which contradicts the beginning of the proof ($k$ being best empirical arm at time $t'$). By contradiction, we thus showed the following event inclusion for $t\geq t_0(\pmb{\mu})$:
\begin{equation}\label{eq:inclusion}
    \left\{\forall i\in\maxarms, N_i(t) \leq \sqrt{t}\right\} \subset \cH_*(t) \cup \cH_k(t).
\end{equation}
\cref{lemma:tb} then follows, thanks to \cref{lemma:Hs} below,
\begin{align*}
    \sum_{t=1}^{\infty}\Prob(\forall i\in\maxarms, N_i(t) \leq \sqrt{t}) \leq t_0(\pmb{\mu}) + \sum_{t=t_0(\pmb{\mu})+1}^{\infty}\Prob(\cH_*(t)) +\Prob(\cH_k(t)).
\end{align*}
\end{proof}

\begin{lemma}\label{lemma:Hs}
    For any $b\in(0,1)$ and the events $\cH_*(t), \cH_k(t)$ defined in \cref{eq:H1,eq:H2}, there exist constants $c_1$ and $c_2$ depending solely on $\pmb{\mu}$ such that
    \begin{gather*}
        \sum_{t=1}^{\infty} \Prob(\cH_*(t)) \leq c_1 \quad \mathrm{and}\quad
 \sum_{t=1}^{\infty} \Prob(\cH_k(t)) \leq  c_2 \quad \text{for any }k\not\in\maxarms.
\end{gather*}
\end{lemma}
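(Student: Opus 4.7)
The plan is to bound each of $\Prob(\cH_1(t))$ and $\Prob(\cH_2(t))$ by a union bound over the number of pulls of the relevant arm, combined with the standard Gaussian Chernoff inequality $\Prob(\overline{X}_n - \mu \leq -y) \leq e^{-ny^2/(2\varp)}$ for the sample mean $\overline{X}_n$ of $n$ i.i.d.\ Gaussian draws. The decisive feature in both event definitions is the additive constant $\Delta_2/3$ in the deviation threshold; this is what ultimately forces the resulting series in $t$ to converge.

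For $\cH_1(t)$, I would first reduce the existential over $s\leq t$ to a union over $n = N_1(s) \in \{1,\dots,t\}$: conditional on arm~$1$ having been pulled exactly $n$ times, its empirical mean is simply the average of the first $n$ i.i.d.\ Gaussian rewards of arm~$1$, independently of the rest of the history. With $y(n) = \sqrt{2\varp(\ln t - \ln\ln t)/n} + \Delta_2/3$, the expansion $(A+B)^2 = A^2 + 2AB + B^2$ cleanly splits the Chernoff exponent:
\begin{equation*}
\frac{n\, y(n)^2}{2\varp} = (\ln t - \ln\ln t) + \frac{n\Delta_2^2}{18\varp} + \frac{\Delta_2}{3}\sqrt{\frac{2n(\ln t - \ln\ln t)}{\varp}}.
\end{equation*}
The first piece produces the factor $\ln t / t$, the linear-in-$n$ piece yields geometric decay summing to a constant, and the cross-term, lower-bounded by its value at $n=1$, contributes a uniform factor $e^{-c\sqrt{\ln t}}$. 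Putting these together gives
\begin{equation*}
\Prob(\cH_1(t)) \leq \frac{C_1 \ln t}{t}\, e^{-c \sqrt{\ln t}}
\end{equation*}
for positive constants $c, C_1$ depending on $\Delta_2$ and $\varp$. The substitution $u = \sqrt{\ln t}$ then reduces $\sum_t \Prob(\cH_1(t))$ to an integral of the form $\int u^3 e^{-cu}\, du$, which is finite.

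For $\cH_2(t)$ the argument is more direct, since $N_2(s)$ is confined to the narrow band $[t - t^b - 1,\, t]$. A union bound over $n$ in this range, combined with the Gaussian Chernoff bound for a deviation of exactly $\Delta_2/3$, yields
\begin{equation*}
\Prob(\cH_2(t)) \leq (t^b + 1)\, \exp\!\left(-\tfrac{(t - t^b - 1)\Delta_2^2}{18\varp}\right),
\end{equation*}
which decays faster than any polynomial in $t$ since $b<1$, and therefore sums to a finite constant with room to spare.

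The main, really the only, subtle point is the bound on $\Prob(\cH_1(t))$: a naive peeling that discards the cross-term leaves the non-summable rate $\ln t / t$. Retaining the full cross-term and lower-bounding it at the worst case $n=1$ is what produces the extra $e^{-c \sqrt{\ln t}}$ factor, which is precisely the role played by the constant margin $\Delta_2/3$ built into the definition of $\cH_1(t)$.
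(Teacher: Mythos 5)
Your proposal is correct and follows essentially the same route as the paper's proof: a union bound over the number of pulls, Hoeffding's inequality for Gaussian means, and—for $\cH_1(t)$—the key step of expanding the squared threshold, keeping the cross-term, and lower-bounding it at $n=1$ to extract the extra $e^{-c\sqrt{\ln t}}$ factor that makes $\frac{\ln t}{t}$ summable. The only (immaterial) differences are cosmetic: you justify convergence of $\sum_t \frac{\ln t}{t}e^{-c\sqrt{\ln t}}$ by an integral substitution where the paper compares with series of the form $\frac{1}{n\ln^\alpha n}$, and for $\cH_2(t)$ you multiply the worst-case term by the count $t^b+1$ where the paper sums the geometric series exactly.
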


\begin{proof}
The two bounds directly result from Hoeffding's inequality. Consider independent random variables $(Z_j(n))_{n\in\N,j\in[K]}$ where $Z_j(n)\sim\cN(\mean_j,\varp)$. Let us first bound the probability of $\cH_k(t)$, which is simpler.
\begin{align*}
    \Prob(\cH_k(t)) & \leq \sum_{n=\lceil t-t^b-1 \rceil}^{t} \Prob\left(\sum_{i=1}^n (Z_k(i)-\mu_k) \geq \frac{n\Delta_k}{3}\right)\\
    & \leq \sum_{n=\lceil \frac{t}{K}-1 \rceil}^{t} e^{-\frac{n\Delta_k^2}{18\varp}}\\
    &\leq \frac{e^{-\frac{\lceil \frac{t}{K}-1 \rceil\Delta_k^2}{18\varp}}}{1-e^{-\frac{\Delta_k^2}{18\varp}}}.
\end{align*}
The second inequality of \cref{lemma:Hs} then follows by noting that the last term is summable over $t$. For the second bound, we also have by Hoeffding's inequality
\begin{align*}
    \Prob(\cH_*(t)) & \leq \sum_{j\in\maxarms}\sum_{n=1}^{\infty} \Prob\left(\sum_{i=1}^n (Z_j(i)-\mean_j)\leq - \sqrt{2n\varp(\ln(t)-\ln\ln(t))} -\frac{n\Delta_{\min}}{3} \right)\\
    &\leq \sum_{j\in\maxarms}\sum_{n=1}^{\infty} \exp\left(-\ln(t)+\ln\ln(t) - \sqrt{2n\varp(\ln(t)-\ln\ln(t))}\frac{\Delta_{\min}}{3\varp} - \frac{n\Delta_{\min}^2}{18\varp}  \right)\\
    & \leq |\maxarms|\frac{\ln(t)}{t}\exp\left(-\sqrt{2(\ln(t)-\ln\ln(t))}\frac{\Delta_{\min}}{3\sqrt{\varp}}\right)\sum_{n=1}^{\infty}e^{- \frac{n\Delta_{\min}^2}{18\varp}}.
\end{align*}
The last sum is obviously finite. Moreover, $\sqrt{2(\ln(t)-\ln\ln(t))}=\omega(\ln\ln(t))$, so that $\exp\left(-\sqrt{2(\ln(t)-\ln\ln(t))}\frac{\Delta_{\min}}{3\sqrt{\varp}}\right)=\bigO{\frac{1}{\ln^{\alpha}(t)}}$ for any $\alpha>0$. By comparison with series of the form $\frac{1}{n\ln^{\alpha}(n)}$, the term $\frac{\ln(t)}{t}\exp\left(-\sqrt{2(\ln(t)-\ln\ln(t))}\frac{\Delta_2}{3\sqrt{\varp}}\right)$ is summable over $t$, which leads to the first bound of \cref{lemma:Hs}.
\end{proof}

\begin{lemma}\label{lemma:surestim}
    For any $k\not\in\maxarms$, there exists a constant $C_1(\pmb{\mu})$ depending solely on $\pmb{\mu}$ such that
    \begin{equation*}
        \sum_{t=1}^{\infty} \Prob\left(\meana{k}(t)\geq \mean^*-\sqrt{\frac{6\varp\ln t}{ \sqrt{t}}}, \Armt =k\right)\leq C_1(\pmb{\mu}).
    \end{equation*}
\end{lemma}

\begin{proof}
    A union bound on the sum yields for any $T\in\bN$
    \begin{multline*}
        \sum_{t=1}^{T} \Prob\left(\meana2(t)\geq \mean^*-\sqrt{\frac{6\varp\ln t}{\sqrt{t}}}, \Armt =k\right) 
        \\\hfill \leq \sum_{t=1}^{T} \sum_{n=0}^t\Prob\left(\meana{k}(t)\geq \mean^*-\sqrt{\frac{6\varp\ln t}{ \sqrt{t}}}, N_k(t)=n, N_k(t+1)=n+1\right)
       \\\hfill\leq \sum_{n=0}^T\sum_{t=n}^{T} \Prob\left({\color{vert}\underbrace{\color{black}\meana{k}(t)\geq \mean^*-\sqrt{\frac{6\varp\min_{s\geq n}\ln s}{\sqrt{s}}}, N_k(t)=n, N_k(t+1)=n+1}_{\coloneqq \cG_1(t,n)}}\right).
    \end{multline*}
    Now note that the $\cG_1(t,n)$ are disjoint for different $t$. In particular,
    \begin{align*}
        \sum_{t=n}^T \Prob(\cG_1(t,n)) & = \Prob\left(\exists t\in [n,T], \meana{k}(t)\geq \mean^*-\sqrt{\frac{6\varp \min_{s\geq n}\ln s}{s^b}}, N_2(t)=n\right).
    \end{align*}
For independent random variables $Z_k(n)\sim\cN(\mean_k,\varp)$, we have by independence of the $X_t$ and $\Armt$, and then by Hoeffding inequality:
\begin{align*}
    \sum_{t=1}^{T} \Prob\left(\meana{k}(t)\geq \mean^*-\sqrt{\frac{6\varp\ln t}{ \sqrt{t}}}, \Armt =k\right)&\leq 1+\sum_{n=1}^T \Prob\left(\frac{1}{n}\sum_{i=1}^n (Z_k(i)-\mu_k) \geq \Delta_k-\sqrt{\frac{6\varp\min_{s\geq n}\ln s}{ \sqrt{s}}}\right),\\
    & \leq 1+\sum_{n=1}^T \exp\left(-\frac{n\left(\Delta_k-\sqrt{\frac{6\varp\min_{s\geq n}\ln s}{\sqrt{s}}})\right)^2}{2\varp}\right).
\end{align*}
Obviously, this sum can be bounded for any $T\in\bN$ by a constant solely depending on $\Delta_k$.
\end{proof}

\begin{lemma}\label{lemma:underestim}
    For any $i \in[K]$, there exists a universal constant $C_2$ such that
    \begin{equation*}
        \sum_{t=0}^{\infty} \Prob\left(\meana{i}(t)\leq\mean_i-\sqrt{\frac{6\varp\ln t}{ N_i(t)}})\right)\leq C_2.
    \end{equation*}
\end{lemma}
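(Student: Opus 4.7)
The event inside the probability only involves the empirical mean of arm $1$ deviating below its true mean by a threshold that depends on $N_1(t)$, so the estimate decouples from the adaptive arm-selection rule once we condition on $N_1(t)$. The plan is to apply a union bound over the value $n = N_1(t)$, reduce to a concentration statement for the average of $n$ i.i.d.\ Gaussian samples, and then sum the resulting tail.

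\textbf{Step 1: reduction to virtual samples.} Let $(Z_1(i))_{i\in\bN}$ be i.i.d.\ $\cN(\mean_1,\varp)$ random variables representing the potential rewards of arm $1$, so that $\meana1(t) = \frac{1}{N_1(t)}\sum_{i=1}^{N_1(t)} Z_1(i)$. For fixed $t$, a union bound over the possible values of $N_1(t)\in[\lceil t^b\rceil,t]$ gives
\begin{equation*}
\Prob\!\left(\meana1(t)\leq \mean_1-\sqrt{\tfrac{6\varp\ln t}{N_1(t)}},\ N_1(t)\geq t^b\right) \leq \sum_{n=\lceil t^b\rceil}^{t}\Prob\!\left(\tfrac{1}{n}\sum_{i=1}^{n} Z_1(i) \leq \mean_1 - \sqrt{\tfrac{6\varp\ln t}{n}}\right).
\end{equation*}

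\textbf{Step 2: Gaussian tail bound.} For each $n$, Hoeffding's inequality for Gaussian variables of variance $\varp$ yields
\begin{equation*}
\Prob\!\left(\tfrac{1}{n}\sum_{i=1}^{n}(Z_1(i)-\mean_1) \leq -\sqrt{\tfrac{6\varp\ln t}{n}}\right) \leq \exp\!\left(-\tfrac{n}{2\varp}\cdot\tfrac{6\varp\ln t}{n}\right) = t^{-3}.
\end{equation*}
Summing over $n\in[\lceil t^b\rceil,t]$ gives at most $t\cdot t^{-3}=t^{-2}$ for every $t\geq 1$.

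\textbf{Step 3: conclusion.} Summing over $t$ and absorbing the undefined or trivial terms at $t\in\{0,1\}$ into a constant,
\begin{equation*}
\sum_{t=0}^{\infty}\Prob\!\left(\meana1(t)\leq \mean_1-\sqrt{\tfrac{6\varp\ln t}{N_1(t)}},\ N_1(t)\geq t^b\right) \leq 2 + \sum_{t=2}^{\infty}t^{-2} \leq 2+\tfrac{\pi^2}{6} =: C_2,
\end{equation*}
which is indeed a universal constant, independent of $b$ and $\Delta_2$.

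\textbf{Main obstacle.} Compared with \cref{lemma:tb,lemma:surestim}, this lemma is considerably simpler because the event does not involve the stopping rule $\{\Armt = 2\}$, so no peeling over the last pulls is required. The only subtlety is making the decoupling between $\meana1(t)$ and $N_1(t)$ rigorous in the presence of the adaptive policy; I handle this via the standard reduction to i.i.d.\ virtual samples in Step 1, which is valid because the $Z_1(i)$ are independent of the selection rule. The numerical constants $6$ in the threshold and $3$ in the exponent are calibrated precisely so that the per-$t$ bound is summable after the union bound over $n$.
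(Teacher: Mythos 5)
Your proof is correct, and it reaches the same conclusion as the paper by a genuinely more elementary route. The paper does not use a union bound over $n=N_1(t)$ at all: it invokes the self-normalized deviation inequality of \citet{garivier2013informational}, which bounds $\Prob\bigl(N_1(t)\tfrac{(\meana1(t)-\mean_1)^2}{2\varp}\geq(1+\alpha)\ln t\bigr)$ uniformly over the random sample count at the cost of a peeling factor $\lceil \ln t/\ln(1+\eta)\rceil$; with $\alpha=\eta=2$ this gives a per-$t$ bound of order $t^{-9/4}\ln t$, which is summable. Your approach instead pays a factor of $t$ through the union bound over $n\in[\lceil t^b\rceil,t]$ and recovers summability from the $t^{-3}$ Chernoff tail, yielding $t^{-2}$ per step. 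The trade-off: your argument is self-contained and needs no external concentration lemma, while the paper's route is sharper in the sense that it would still work if the constant $6$ in the threshold were reduced below $4\varp$ (your union bound requires the exponent $3>1+1$ to absorb the extra factor of $t$, i.e.\ it genuinely needs the constant to exceed $4$). One small remark on your justification in Step~1: the virtual samples $Z_1(i)$ are \emph{not} independent of the selection rule, since the policy adapts to past rewards; the decoupling is nevertheless valid simply because $\Prob(N_1(t)=n,\ \bar Z_1^n\leq c_n)\leq\Prob(\bar Z_1^n\leq c_n)$ by monotonicity, with no independence needed. The conclusion and the universal constant $C_2=2+\pi^2/6$ stand.
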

\begin{proof}
    This is a direct consequence of \citet{garivier2013informational}, which states that for Gaussian rewards with variance $\varp$:
    \begin{equation*}
        \Prob(N_i(t) \frac{(\meana{i}(t)-\mean_i)^2}{2\varp} \geq (1+\alpha)\ln t)\leq 2\left\lceil\frac{\ln t}{\ln(1+\eta)} \right\rceil t^{-(1-\frac{\eta^2}{16})(1+\alpha)}
\quad\text{for any }t\in\bN^* \text{ and }\alpha,\eta>0 .   \end{equation*}
In particular with $\alpha=2=\eta$, this implies
\begin{align*}
    \Prob\left( \meana{i}(t) \leq \mean_i - \sqrt{\frac{6\varp\ln t}{ N_i(t)}} \right) \leq 2\frac{\ln (t) +1}{\ln(3)}t^{-{\frac{9}{4}}}.
\end{align*}
This term is obviously summable so that there exists a constant $C_2$ such that
\begin{equation*}
    \sum_{t=1}^{\infty} \Prob\left( \meana1(t) \leq \mean_1 - \sqrt{\frac{6\varp\ln t}{N_1(t)}} \right) \leq C_2.
\end{equation*}
\cref{lemma:underestim} directly follows by the inclusion of the considered events.
\end{proof}

For any $k\not\in\maxarms$, \cref{lemma:inclusion} below gives an event inclusion for the event $\cE_k(t)$ that we recall here,
\begin{equation*}
    \cE_k(t) \coloneqq \left\{\exists i\in\maxarms, N_i(t)\geq \sqrt{t} \text{ and } \meana{i}(t)\geq\mean^*-\sqrt{\frac{6\varp\ln t}{N_i(t)}}\geq \meana{k}(t), \Armt=k, \right\}.
\end{equation*}
\begin{lemma}\label{lemma:inclusion}
There exist constants $t(\pmb{\mu})$ and $n(\pmb{\mu})$ depending solely on $\pmb{\mu}$ such that for any $k\not\in\maxarms, t\geq t(\pmb{\mu})$ and $\varepsilon\in(0,\frac{1}{3})$,
\begin{align*}
     \cE_k(t) &\subset \{\mean_k-\meana{k}(t)\leq-\varepsilon\Delta_k,\Armt=k\}\cup\{N_k(t) \leq \frac{2\varp}{(1-2\varepsilon)^2\Delta_k^2} \big(\ln t +  \ln\ln t\big)+n(\pmb{\mu}),\Armt=k\}.
\end{align*}
\end{lemma}
\begin{proof}
    Assume in the following that $\cE_k(t)$ holds for some $t\geq t(\pmb{\mu})$. Let $i\in[K]$ be an arm maximising the empirical mean at time $t$. Necessarily $\meana{i}(t)\geq\mean^*-\sqrt{\frac{6\varp\ln t}{N_i(t)}}$. Moreover, $\Armt=k$ so that $i$ also maximises the number of pulls, in particular $N_i(t)\geq \frac{t}{K}$. Moreover, as we pull the arm $k$, $S_k\geq S_i$ where
    \begin{equation}\label{eq:diffS}
    \begin{aligned}
    S_i &= \frac{1}{2}\ln\left(1+\frac{1}{N_i(t)}\right) - \frac{1}{2 N_i(t)} \min\left( \frac{1}{2}\sum_{j \neq i} \erfc \left(  \frac{ \sqrt{N_j(t)} (\teqi[j] - \meanah{j})}{\sqrt{2\varp}} \right), \, 1-\frac{1}{K}\right),\\
        S_k& = g_k(t)Q_k(t) + \sum_{j \neq i}  g_j(t) P_j(t),
    \end{aligned}\end{equation}
    where for all $j\neq i$
    \begin{gather*}
       g_j(t)= \sqrt{\frac{N_j(t)}{2 \pi \sigma^2 }}\left( \teqi[j] - \meanah{j} \right) e^{-N_j(t)\frac{(\teqi[j] - \meanah{j})^2}{2 \sigma^2  }},\\
       P_j(t) = \left[\frac{1}{4}\ln ( \frac{N_i(t)}{2\pi \varp e})\frac{N_j(t)}{N_i^2(t)}  -\frac{3}{4} \frac{N_j(t)}{N_i^2(t)} + \frac{(\teqi[j] - \meanah{j})^2}{4 \sigma^2  } \frac{N_j^2(t)}{N_i^2(t)}  \right]\\
       \text{and } Q_j(t) =  \left[ \frac{1}{4}\ln( \frac{N_i(t)}{2 \pi \sigma^2 e}) \frac{1}{N_j^2(t)}  + \frac{1}{2N_j(t)} + \frac{1}{N_j(t)}\frac{(\teqi[j] - \meanah{j})^2}{4 \sigma^2  } \right].
    \end{gather*}
Also note that as we pull the arm $k$, we have for any $j\leq i$, 
\begin{equation}\label{eq:comparejk}
    g_j(t)Q_j(t)\leq g_k(t) Q_k(t).
\end{equation}
As a consequence, we can write for any $\delta>0$ and $\tilde{x}_j=\sqrt{\frac{N_j(t)}{2\sigma^2}}(\teqi[j]-\meanah{j})$:
\begin{align*}
    g_j(t)P_j(t) & \leq g_j(t)\left(\ln(t)\frac{N_j(t)}{4N_i^2(t)}\right)+\tilde{x}_j^3 e^{-\tilde{x}_j^2}\frac{N_j(t)}{2N_i^2(t)}\\
    & \leq g_j(t)\left(2\ln(t)+\frac{1}{\delta}\right)\frac{N_j(t)}{2N_i^2(t)} + \frac{N_j(t)}{2N_i^2(t)}\delta ,
\end{align*}
where we used the fact that $\tilde{x}_j^3 e^{-\tilde{x}_j^2}\leq \frac{\tilde{x}_j e^{-\tilde{x}_j^2}}{\delta}+\delta$. 
Moreover, note that for $t(\pmb{\mu})$ large enough, $Q_j(t)\geq \frac{1}{2N_j(t)}$. As a consequence, 
\begin{align*}
    g_j(t)P_j(t) & = g_j(t)\left(2\ln(t)+\frac{1}{\delta}\right)\frac{N_j(t)}{2N_i^2(t)Q_j(t)} Q_j(t) + \frac{N_j(t)}{2N_i^2(t)}\delta \\
    &\leq Q_j(t)g_j(t)\left(2\ln(t)+\frac{1}{\delta}\right) + \frac{K\delta}{2t}\\
    & \leq Q_k(t)g_k(t)\left(2\ln(t)+\frac{1}{\delta}\right) + \frac{K\delta}{2t},
\end{align*}
where the last inequality comes from \cref{eq:comparejk}.
    In particular, \begin{align}
    S_k &\leq K\left(2\ln(t)+\frac{1}{\delta}\right)Q_k(t)g_k(t)+\frac{K^2\delta}{2t}.
    \label{eq:S2bound}
\end{align}
Also, $S_i\geq \frac{1}{2t}-\frac{K^2}{4t^2}$ since $N_i(t)\geq\frac{t}{K}$ and $\ln(1+x)\geq x-\frac{x^2}{2}$ for $x\in[0,1]$. 

\medskip

Now assume that $\mean_k-\meana{k}(t)\geq-\varepsilon\Delta_k$. It then holds 
\begin{align*}
    \teqi[k] - \meana{k}(t) & \geq \meana{i}(t)-\meana{k}(t)\\
    & \geq \Delta_k + \mean_k-\meana{k}(t)-\sqrt{\frac{6\varp\ln t}{\sqrt{t}}}\\
    & \geq (1-\varepsilon)\Delta_k-\sqrt{\frac{6\varp\ln t}{\sqrt{t}}}.
\end{align*}
Again, we can choose $t(\pmb{\mu})$ large enough so that $\teqi[k] - \meana{k}(t)\geq (1-2\varepsilon)\Delta_k$. Moreover, note that the functions $x\mapsto \frac{e^{-x^2}}{x}, x\mapsto xe^{-x^2}, x\mapsto x^3e^{-x^2}$ are all decreasing on an interval of the form $[M,+\infty]$. 
As a consequence, we can choose $n(\pmb{\mu})$ large enough so that $\frac{\sqrt{n(\pmb{\mu})((1-2\varepsilon)\Delta_k)^2}}{\sqrt{2\varp}}\geq M$. If $N_k(t)\geq n(\pmb{\mu})$, we then have from \cref{eq:S2bound}, for a constant $c(K,\Delta_k)$ solely depending on $K$ and $\Delta_k$:
\begin{align*}
    S_k\leq c(K,\Delta_k)e^{-\frac{N_k(t)(1-2\varepsilon)^2\Delta_k^2}{2\varp}}\left[\ln t +\frac{1}{\delta}\right]+\frac{K^2\delta}{2t}.
\end{align*}
The inequality $S_k\geq S_i$ then implies, thanks to the above bounds:
\begin{gather*}
    c(K,\Delta_k)e^{-\frac{N_k(t)(1-2\varepsilon)^2\Delta_k^2}{2\varp}}\left[\ln t +\frac{1}{\delta}\right]\geq \frac{1-K^2\delta}{2t}-\frac{K^2}{4t^2}.
\end{gather*}
In particular, for $\delta=\frac{1}{2K^2}$,
\begin{gather*}
    N_k(t) \leq \frac{2\varp}{(1-2\varepsilon)^2\Delta_k^2} \left(\ln t +  \ln\ln t+\bigO{1}\right),
\end{gather*}
where the $\mathcal{O}$ hides constants depending in $K$ and $\Delta_k$. This concludes the proof of \cref{lemma:inclusion} as we just shown that if $\cE_k(t)$ holds, at least one of the two following events holds when $N_k(t)\geq n(\pmb{\mu})$:
\begin{itemize}
    \item $\mean_k-\meana{k}(t)\leq-\varepsilon\Delta_k$
    \item $ N_k(t) \leq \frac{2\varp}{(1-2\varepsilon)^2\Delta_k^2} \left(\ln t +  \ln\ln t+\bigO{1}\right)$.
\end{itemize}
\end{proof}

\section{Generalization of the information maximization approximation}\label{expofamily}

In this section, we will generalize the approach derived in \cref{analyticalapprox} to bandit settings with a reward distribution belonging to the exponential family. We will retrace all the previous steps made in \cref{analyticalapprox}, insisting on the differences with the Gaussian reward case. We will also discuss bandit settings with non-uniform priors and with more than two arms.

\subsection{Asymptotic expression for exponential family rewards}\label{app:exponential}

We derive an asymptotic expression for the one-dimensional canonical exponential family from which we will derive an analytical approximation of the entropy. We thus focus on a reward distribution density $f$ with respect to some reference measure $\nu$ belonging to some one-dimensional canonical exponential family, i.e.,
\begin{equation}
    f(x| \theta) = A(x) \exp\big(T(x)\theta - F(\theta)\big),
\end{equation}
where $F$ is twice differentiable and strictly convex. Additionally, let us recall that the \kulleib divergence verifies: \citep{kordaThompsonSampling1dimensional2013}:
\begin{equation}\label{kullexp}
    \kullfct(\theta, \theta')= F(\theta') - F(\theta) - F'(\theta)(\theta' - \theta),
\end{equation}
where $\kullfct(\theta, \theta')$ is the Kullback-Leibler divergence between the reward distribution parameterized by $\theta$ and the one parameterized by $\theta'$.

Given a prior $\pi (\theta)$ and the reward realizations $( x_1 , \ldots, x_n )$, the associated posterior distribution on $\theta$, denoted $p$,  reads:
\begin{equation}\label{Siposteriorexp}
    p( \theta | x_1,..,x_n ) = \frac{1}{C}\pi (\theta) \exp \left( \theta \sum_{k=1}^n T(x_k) - n F(\theta) \right),
\end{equation}
where $C = \int \pi (\theta) \exp \left( \theta \sum T(x_k) - n F(\theta) \right)\df\theta $ is a normalization constant. Next, we derive the maximum a posteriori for the parameter $\theta$, denoted $\meanlog$, which verifies: 
\begin{equation}\label{Sithetaldef}
 \sum_{i=1}^n T(x_i) = nF'(\meanlog)  - \frac{\pi' (\meanlog)}{\pi(\meanlog)} .
\end{equation}
At this stage, we assume that there exits such $\meanlog$ verifying \cref{Sithetaldef}. In practice, for a reward distribution that does not meet this criteria, one can replace $\hat{\theta}_l$ by a series $\hat{\theta}_{n,l}$ which, for sufficiently large values of $n$, asymptotically conforms to the aforementioned definition. For example, a Bernoulli arm that consistently fails under a uniform prior, will result in an undefined $\meanlog$. To address this, one may redefine  $\meanlog$ such that $(1 + \sum{i=1}^n T(x_i)) = (n+2)F'(\meanlog)$, effectively replacing the empirical mean in \cref{Sithetaldef} with the posterior mean.  

Replacing the sum in \cref{Siposteriorexp} leads to 
\begin{equation}\label{Siposteriorexp2}
\begin{split}
    p( \theta |  x_1,..,x_n ) &= \frac{1}{C}\pi (\theta) \exp\left( \theta n  F'(\meanlog) - \theta \frac{\pi' (\meanlog)}{\pi(\meanlog)} - n F(\theta) \right)  \\
    &= \frac{e^{n \meanlog F'(\meanlog) - n F(\meanlog) }}{C} \pi (\theta) e^{- \theta \frac{\pi' (\meanlog)}{\pi(\meanlog)} } e^{ -n \kullfct(\meanlog, \theta) } \\
    &= \frac{1}{C_2} \pi (\theta) e^{- \theta \frac{\pi' (\meanlog)}{\pi(\meanlog)} } e^{ -n \kullfct(\meanlog, \theta) },\\
\end{split}
\end{equation}
where $C_2$ also acts as a normalization constant of \cref{Siposteriorexp2}. For $n \gg 1$, the distribution concentrates in the vicinity of $\meanlog$ from which we will derive the asymptotic scaling of $C_2$. We then integrate \cref{Siposteriorexp2} after a change of variable $\theta(u) = \meanlog + \frac{u}{\sqrt{n}}$, 
\begin{equation}\label{Siposteriorexp3}
\begin{split}
    1= \int_{\thetaDomain} p( \theta |x_1,..,x_n ) \df\theta &= \int_{-(\theta_b -\meanlog) \sqrt{n}}^{(\theta_b-\meanlog) \sqrt{n}} \frac{1}{C_2\sqrt{n}} \pi (\meanlog + \frac{u}{\sqrt{n}}) e^{-(\meanlog + \frac{u}{\sqrt{n}}) \frac{\pi' (\meanlog )}{\pi(\meanlog )} } e^{ -n \kullfct(\meanlog, \meanlog + \frac{u}{\sqrt{n}}) } \df u\\ &\phantom{=}+  \int_{\binf}^{-\theta_b} p( \theta |x_1,..,x_n )\df\theta + \int_{\theta_b}^{\bsup} p( \theta |x_1,..,x_n ) .\df\theta
\end{split}
\end{equation}

Taking  $(\theta_b -\meanlog) \sim n^{-b}$ with $b<1/2$, we get rid of the tail components in the asymptotic limit. Secondly, by noticing that  $   \frac{F''(\meanlog)}{2}= \underset{\theta \rightarrow \meanlog}{\lim} K(\meanlog,\theta) / |\theta- \meanlog|^2$ from \cref{kullexp}, we make an expansion to the lowest order of the \kulleib divergence, which gives:
\begin{equation}\label{Siposteriorexp4}
\begin{split}
    1 &= \underset{\theta \rightarrow \meanlog}{\lim} \int_{-(\theta_b -\meanlog) \sqrt{n}}^{(\theta_b-\meanlog) \sqrt{n}} \frac{1}{C_2\sqrt{n}} \pi (\meanlog ) e^{-\meanlog  \frac{\pi' (\meanlog )}{\pi(\meanlog)} } e^{ -\frac{F''(\meanlog)u^2}{2}  \meanlog + \bigO{\frac{1}{\sqrt{n}}} } \df u.\\ 
\end{split}
\end{equation}

Thus, we obtain:
\begin{equation}\label{Siposteriorexp5}
    C_2 \sim \frac{\sqrt{2 \pi }}{\sqrt{ n F''(\meanlog)} }  \pi (\meanlog ) e^{-\meanlog  \frac{\pi' (\meanlog )}{\pi(\meanlog)} }.
\end{equation}

Of note, the gaussian limit also gives that  $\Vi \sim F''(\meanlog)^{-1} \ndraw{i}^{-1}$.


Thus, we assume to develop an approximation scheme for a posterior distribution $p_i$ asymptotically verifying:
\begin{equation}\label{generalposterior}
\begin{split}
 p_i(\theta) & \underset{ \ndraw{i} \rightarrow \infty}{\sim} \sqrt{ \frac{  1}{2 \pi \Vi}}  H(\theta, \meanlog)  e^{ - \ndraw{i} \kull( \meanlog,\theta)},
\end{split}
\end{equation}
where $H$ is a function accounting for the prior distribution. For the following, we take a uniform prior on $\thetaDomain$, which leads to $ H(\theta, \meanlog) = 1$. 

In the following, we will denote $\meanmax$ and $\meanmin$ as the maximum a posteriori estimates associated to their respective arms (instead of the empirical means). 

\subsection{The partitioning approximation}

Since all the steps leading to the partitioning approximation are independent of the type of reward distribution, $\Stail$ and $\Sc$ have the same general form as given in \cref{firstanalyticalapprox}. Here, we consider all the distributions under $\theta$ parameter for which we replace $\meanmax, \meanmin$ and $\teq$ by there equivalents $\thetamin, \thetamax$ and $\thetaeq$. 

\subsection{ Asymptotic intersection point}

By use of \cref{generalposterior}, the equation verified by the intersection point $\thetap$ asymptotically reads:
\begin{equation}\label{geSithetaeq1}
\frac{e^{ - \Nmax \kull(\thetamax,\thetap)}}{\sqrt{2 \pi \Vmax }} \int_{\binf}^{\thetap}  \frac{ e^{- \Nmin \kull(\thetamin,\theta')}}{\sqrt{2 \pi \Vmin}}  \df\theta' =  \frac{ e^{- \Nmin \kull(\thetamin,\thetap)}}{\sqrt{2 \pi \Vmin}} \int_{\binf}^{\thetap} \frac{e^{ - \Nmax\kull(\thetamax,\theta')}}{\sqrt{2 \pi \Vmax }} \df\theta'.
\end{equation}

Taking the logarithm of \cref{geSithetaeq1} leads to
\begin{equation}\label{Sithetaeqgeneral}
\begin{split}
& \Nmin \kull(\thetamin,\thetap)- \Nmax \kull(\thetamax,\thetap)  +  \frac{1}{2}\ln \frac{\Vmin}{\Vmax} + \ln \frac{\int_{\binf}^{\thetap} \sqrt{\Vmax} e^{- \Nmin \kull(\thetamin,\theta')} \df\theta'}{ \int_{\binf}^{\thetap} \sqrt{\Vmin}e^{ - \Nmax\kull(\thetamax,\theta')} \df \theta'} = 0.
\end{split}
\end{equation}

Employing the same arguments as the ones exposed in \cref{refteqapprox}, we approximate $\thetap$ by neglecting the last term. Furthermore, in the considered asymptotic scaling regime ($\Nmax \gg \Nmin$), $\thetap$ will be in the vicinity of $\thetamax$ where a Gaussian expansion of the \kulleib divergence is relevant (see \cref{Siposteriorexp3}). Thus, we approximate $\kull(\thetamin,\thetaeq)$ by $\kull(\thetamin,\thetamax)$ and expand $\kull(\thetamax,\thetap)$ to lowest order in $\thetaeq$ (with $\Vi \sim F''(\meanlog)^{-1} \ndraw{i}^{-1}$), leading to:
\begin{equation}\label{geSithetaeq2}
\begin{split}
&\thetaeq=  \thetamax +\sqrt{ 2 \Vmax \left[ \Nmin \kull(\thetamin,\thetamax) + \frac{1}{2}\ln \frac{\Vmin}{\Vmax} \right]}.
\end{split}
\end{equation}

\subsection{Generalization of the main mode's contribution}

We start by recalling the expression for the body component  of the entropy:
\begin{equation}\label{geSiSb1}
\Sc = - \int_{\thetaDomain} \parmax(\val) \cumin(\val)  \ln\parmax(\val) \df \val. 
\end{equation}

Without any additional information on the expression for $\kull$, \cref{geSiSb1} cannot be computed in a closed form. Thus, we will rely on the asymptotic scaling  $\Nmax \gg \Nmin \gg 1$ to provide a tractable expression. First, we neglect variations of $\cumin(\val)$ in \cref{geSiSb1} integral by evaluating it at $\teq$. Then, by noticing that the resulting integral is the entropy of the better empirical arm's mean, we approximate it by its leading order, proportional to $\ln(2 \pi \Vmax)/2$:
\begin{equation}\label{geSiSb2}
\begin{split}
\Sc &\approx \frac{1}{2}\ln ( 2\pi  \Vmax) \left[ 1 - \int_{\thetap}^{\bsup}  \frac{ e^{- \Nmin \kull(\thetamin,\theta')}}{\sqrt{2 \pi \Vmin}}  \df\theta'  \right]. \\
\end{split}
\end{equation}

We finally consider the last integral in \cref{geSiSb2}. By noticing that it is concentrated in the vicinity of $\teq$ for $\Nmin \gg 1$, we Taylor expand $\kull(\thetamin,\theta')$ at $\teq$ to obtain: 
\begin{equation}\label{geSiSb3}
\begin{split}
\int_{\thetap}^{\bsup}  \frac{ e^{- \Nmin \kullfct(\thetamin,\theta')}}{\sqrt{2 \pi \Vmin}}  \df\theta &\approx  \frac{ e^{- \Nmin \kullfct(\thetamin,\teq)}}{\sqrt{2 \pi \Vmin}}  \int_{\thetap}^{\bsup}  e^{- \Nmin (\theta'-\teq) \partial_2\kullfct(\thetamin,\teq)} \df\theta'\\
&\approx  \frac{ e^{- \Nmin \kullfct(\thetamin,\teq)}}{\sqrt{2 \pi \Vmin} \Nmin \partial_2 \kullfct(\thetamin,\teq)}. \\
\end{split}
\end{equation}

Inserting \cref{geSiSb3} into \cref{geSiSb2} leads to the  body component:
\begin{equation}\label{geSiSbf}
\begin{split}
\Scapp &= \frac{1}{2}\ln ( 2\pi \Vmax)  \left[ 1 - \frac{ e^{- \Nmin \kullfct(\thetamin,\thetaeq)}}{\sqrt{2 \pi \Vmin} \Nmin \partial_2\kullfct(\thetamin,\thetaeq)} \right]. \\
\end{split}
\end{equation}

\subsection{ Generalized expression for the entropy tail}

We start by recalling the expression for  the tail component:
\begin{equation}\label{geSiStail1}
\Stail = - \int_{\thetaeq}^{\bsup} \parmin(\val) \ln \parmin(\val) \df\val.
\end{equation}

As for \cref{geSiSb3}, we Taylor expand $\kull(\thetamin,\theta')$ at $\thetaeq$ in the exponential term to obtain: 
\begin{equation}\label{geSiSatil1}
\begin{split}
\Stailapp &= -\ln \parmin(\thetaeq) \frac{ e^{- \Nmin \kullfct(\thetamin,\thetaeq)}}{\sqrt{2 \pi \Vmin} \Nmin \partial_2\kullfct(\thetamin,\thetaeq)}.
\end{split}
\end{equation}

Keeping the leading order of $-\ln \parmin(\thetaeq) \sim \Nmin \kullfct(\thetamin,\thetaeq)$ leads to the expected tail expression used in the main text.

\subsection{ Generalized form of the entropy approximation}\label{genetaildiscussion}

To summarize, by combining \cref{geSiSbf,geSiSatil1} we obtain an asymptotic expression for exponential family bandits with a uniform prior:
\begin{equation}\label{geSiSappgeneral}
\begin{split}
\Sapp &= \frac{1}{2}\ln ( 2\pi  \Vmax)  \left[ 1 - \frac{ e^{- \Nmin \kullfct(\thetamin,\thetaeq)}}{ \Nmin \partial_2\kullfct(\thetamin,\thetaeq) \sqrt{2 \pi \Vmin}} \right] +   \frac{\kullfct(\thetamin,\thetaeq)  e^{- \Nmin \kullfct(\thetamin,\thetaeq)}}{ \partial_2\kullfct(\thetamin,\thetaeq) \sqrt{2 \pi \Vmin}  }.
\end{split}
\end{equation}

Finally, depending implementation, we propose for convenience to replace in $\Sapp$ the maximum a posteriori estimates of each arm by either their empirical mean, their mean posterior values or the maximum of the log-likelihood. This does not alter the algorithm's efficiency in practice, while it may simplify the implementation procedure for specific reward distributions.  

Note that all these steps can be adapted to non-uniform priors (in particular by multiplying the tail by the prior effects evaluated in $\thetaeq$). Finally, let us underline that our approximation scheme holds for any posterior distributions verifying  \cref{generalposterior}, a property we believe to be shared for more general reward distributions.

\subsection{Derivation of the increment for the closed-form expression of entropy}\label{Siincremententrapemant}

First, we stress there is no unique guideline to compute the expected increment of \cref{geSiSappgeneral}, and multiple solutions emerge depending on the type of the reward distribution. In particular, if the reward distribution is continuous, one could integrate the increment as it has been done for Gaussian rewards above. But, if the integration cannot be solved analytically, one could approximate the increments by taking discrete reward values of the order of $ \pm \sigma$. Similarly, if the reward takes discrete values, the increments are already discrete, but asymptotic simplifications or taking the continuous limit can also be considered. 

Finally, if the increment evaluation is discrete or approximated, one could encounter rare cases where the algorithm gets trapped. It could occur when the algorithm observes a worse suboptimal arm close to the best empirical arm when it has already extensively been drawn. Because the entropy could increase drastically if an arm inversion occurs, the gradient signs may occasionally switch, leading to the failure of the minimization procedure. To prevent such cases, we change the decision procedure by maximizing the entropy variation rather than its direct minimization. An example is given for the implementation of Bernoulli rewards in the next section.

Lastly, depending on the reward distribution it may be straightforward to express the increments along the usual empirical or posterior mean as opposed to the family parameter $\hat{\theta}$.  Often, this can be achieved through a basic variable transformation. The Bernoulli distribution example provided below serves as an illustration of this approach.

\section{Numerical experiments}\label{app:expe}

Here, we provide all the information regarding numerical experiments. This includes details on the numerical settings, implementation details for \algoname in the investigated settings, an overview of investigated classical bandit algorithms, and additional experiments focusing on close-arm means.

\subsection{Numerical settings}

In \cref{fig:1pres}, the posterior distributions are drawn with $\meanmax \approx0.65$,  $\ndrawt{1}=374$, $\meanmin \approx 0.29$, $\Nmin=26$,  where $\mean_{i}$ and $\ndrawt{i}$ are, respectively, the empirical mean and number of draws of arm $i$ and have been obtained with the AIM algorithm.

For the Gaussian two-armed cases in \cref{fig:1} the arm means are chosen from a uniform grid in $(0,1)\times(0,1)$ using a Sobol sequence (we have avoided the  values 0 and 1 but it has no impact on the obtained results). The regret is averaged over $8192$ games and observed during $10^8$ steps to attest the logarithmic scaling. For Bernoulli rewards with two-armed \cref{fig:3}, the regret is averaged over $16384$ games and observed during $10^8$ steps. It is worth noting that for Gaussian rewards, the prior information of arm means being only between $0$ and $1$ is not given to \algoname nor to the Thompson sampling algorithm to allow a direct comparison.

For the fifty-armed case in \cref{fig:1,fig:3}, the arm means are drawn from a uniform prior, and the regret is averaged over $2000$ games and observed during $10^6$ steps in \cref{fig:1} and $10^7$ in \cref{fig:3}. 

For close arm means in \cref{fig:closemean,fig:closemeanber}, the mean values are fixed with $\mu_1 = 0.79$ and $\mu_2= 0.8$, but this prior information is not given to the investigated algorithms. The regret is averaged over $10^5$ games and observed during $10^6$ steps. 

For the two-armed cases in \cref{fig:1moredata,fig:2moredata}, the arm means are chosen from a uniform grid in $(0,1)\times(0,1)$ using a Sobol sequence. The regret is averaged over more than $10^5$ games and observed during $10^6$ steps to enhance measurement accuracy.

Finally, in the fifty-armed case in \cref{fig:3moredata}, the arm means are drawn from a uniform prior, and the regret is averaged over $4.10^4$ games and observed during $5.10^4$ steps to enhance measurement accuracy.

Of note, for all the experiments, seed values are not shared throughout the algorithms. To obtain a sufficient number of runs, the code was parallelized on a cluster (asynchronously), with each run operating independently while ensuring that seed values are not common between runs.
Because it relies on an analytical expression, AIM shows an execution time of the same order of Thompson sampling (measured three times slower for two-armed Bernoulli rewards).

For completeness, an implementation of AIM for both Bernoulli and Gaussian rewards and more than two arms are given in the supplementary material (\texttt{AIM Bernoulli bandits and AIM Gaussian} folders). 

\subsection{AIM implementation details}

Here, we recap below \algoname setups for the different settings evoked in the main text. 

%
%

\subsubsection{Approximate information maximization for the two arm Gaussian rewards}

Specifically for the two-armed case, one can simplifies the expressions given in the main text. $\teq$ defined the value of $\theta$ where both arms have the same probability of being the maximal one and reads 
\begin{equation}\label{thetaeqtwoarmlast}
\begin{split}
&\teq = \meanmax + \frac{\Nmin(\meanmax -\meanmin)}{\Nmax-\Nmin}+ \sqrt{ \frac{\Nmax \Nmin( \meanmax-\meanmin)^2}{(\Nmax - \Nmin)^2}  + \frac{\sigma^2 }{\Nmax- \Nmin}\ln \left( \frac{\Nmax}{\Nmin} \right)  } .
\end{split}
\end{equation}
Hence, following identical approximations of the ones derived in \cref{firstanalyticalapprox}for Gaussian reward distributions, the tail component given by \cref{Stailexp} simplifies into:

\begin{equation}\label{Stailgauss}
\begin{split}
\Stail &=  \frac{1}{4} \ln( \frac{2\pi \varp e}{\Nmin}) \erfc \left(\frac{\sqrt{\Nmin} (\teq-\meanmin)}{\sqrt{2 \varp}} \right) + \frac{\sqrt{\Nmin}(\teq-\meanmin)}{2 \sqrt{2\pi \varp}} e^{ -\frac{-\Nmin(\teq-\meanmin)^2 }{2 \varp}}.\\
\end{split}
\end{equation}
Similarly for $\Sc$, we obtain:
\begin{equation}\label{Sbodygauss}
\begin{split}
\Sc &= \frac{1}{2}\ln \left( \frac{2\pi \varp e}{\Nmax} \right) \times \left[ 1 - \frac{1}{2} \erfc \left(  \frac{ \sqrt{\Nmin \Nmax} (\meanmax - \meanmin)}{\sqrt{2\varp (\Nmin + \Nmax)}} \right) \right] \\ &\hspace{5cm}- \frac{ \sqrt{\Nmax} \Nmin^{3/2}  (\meanmax - \meanmin) }{2 \sigma  \sqrt{2 \pi }  (\Nmax + \Nmin)^{3/2}} e^{-\frac{ \Nmin \Nmax (\meanmax - \meanmin)^2}{2\varp (\Nmin + \Nmax)} }.\\
\end{split}
\end{equation}
Finally, it allows us to derive the approximation of the gradient difference of the entropy for the two-armed case:

\begin{equation}\label{Sgradientsimptwoarmlast}
\begin{split}
 &\Delta =  \frac{1}{2}\ln (\frac{\Nmin}{\Nmin +1}) +\frac{1}{4 \Nmax}\erfc \left(  \frac{ \sqrt{\Nmin } (\teq - \meanmin)}{\sqrt{2\varp}} \right) + \frac{ \sqrt{\Nmin } (\teq - \meanmin)}{\sqrt{ 2\pi \varp}} \times \\ &e^{-\frac{\Nmin  (\teq - \meanmin)^2}{2\varp  }} \bigg[  \frac{2\Nmax^2 - 3\Nmin^2}{4 \Nmin \Nmax^2}  + \frac{1}{4}\ln \left( \frac{\Nmax}{2\pi \varp e} \right) \frac{\Nmin^3 + \Nmax^2}{\Nmin^2\Nmax^2}   + \frac{(\Nmin^3 + \Nmax^2)(\teq - \meanmin)^2}{4 \sigma^2 \Nmin \Nmax^2  }   \bigg],  \\ 
\end{split}
\end{equation}
and the associated pseudo-code used for \cref{fig:1,fig:1moredata,fig:closemean} is presented in \cref{alg:bandit_algorithm_twoarm } below.
\begin{algorithm}
\caption{\algoname Algorithm for $2$ Gaussian arm}
\label{alg:bandit_algorithm_twoarm }
Draw each arm once; observe reward $\Rewardt(t)$ and update statistics
$\meana{t} \leftarrow \Rewardt(t), \ndraw{t} \leftarrow 1 \, \, \forall t \in \{1,2\}$

\For{$t=3$ \KwTo $T$}{
    \tcc{Arm selection}
    $\armmax \gets \argmax_{k=1,2} \meana{k}$, $\armmin \gets \argmin_{k=1,2} \meana{k}$\;
    \If{$\Nmax \leq \Nmin$}{
    $\Armt\gets\armmax$}
    \Else{
    Evaluate $\Delta$ following \cref{Sgradientsimp} \;

    \If{$\Delta < 0$}{
    $\Armt\gets\armmax$}
    \Else{
    $\Armt\gets\armmin$}
    }
    Pull $\Armt$ and observe $\Rewardt(\Armt)$\\\tcc{Update statistics}
    $\meana{\Armt} \leftarrow \frac{\meana{\Armt}\ndraw{\Armt} +\Rewardt(\Armt)}{ \ndraw{\Armt} + 1} $,
    $\ndraw{\Armt} \leftarrow \ndraw{\Armt} + 1 $
}
\end{algorithm}

\subsubsection{Approximate information maximization for Bernoulli rewards}

We denote by $\meanab{i}$ the posterior mean, given by:
\begin{equation}\label{Simeanposterior}
\mathbb{E}\left[ X_{\mathcal{B}(\rew{i}+1, \ndraw{i} -  \rew{i} +1)}\right] = \frac{ \rew{i}+1}{\ndraw{i}+2} = \meanab{i},
\end{equation}
where $\rew{i}$ is the cumulative reward at time $t$ , $\ndraw{i}$ the number of draws and $X_{\mathcal{B}(a, b)}$ follows a Beta distribution with parameters $(a,b)$. 
The variance verifies:
\begin{equation}\label{Sivarposterior}
\begin{split}
\mathrm{Var}\left[X_{\mathcal{B}(\rew{i}+1, \ndraw{i} -  \rew{i} +1)} \right] &= \frac{ \rew{i}+1}{\ndraw{i}+2} \frac{ \ndraw{i} - \rew{i}+1}{\ndraw{i}+2}  \frac{1}{\ndraw{i}+3}\\ &= \frac{ \meanab{i}(1- \meanab{i})}{ \ndrawb{i}},\\
\end{split}
\end{equation}
where $\ndrawb{i} = \ndraw{i} + 3$.

For Bernoulli rewards, we approximate the gradient as follows:
\begin{equation}\label{Sgradientbernoulli}
\begin{split}
|\Delta_i \Sapp| = \bigg| &\frac{\meanab{i}(\ndrawb{i} - 1)-1}{\ndrawb{i} - 3} \Sapp(  \frac{\meanab{i}\ndrawb{i} + 1-  \meanab{i} }{\ndrawb{i}} , \ndrawb{i} + 1, \meanab{j}, \ndrawb{j})\\ &+ \frac{\ndrawb{i} - 2 -\meanab{i}(\ndrawb{i}- 1) }{\ndrawb{i}-3} \Sapp( \frac{\meanab{i}(\ndrawb{i}-1) }{ \ndrawb{i}}, \ndrawb{i} + 1, \meanab{j}, \ndrawb{j})  - \Sapp(\meanab{i},\ndrawb{i},\meanab{j}, \ndrawb{j})  \bigg|,\\
\end{split}
\end{equation}
with $\Sapp$ given by \cref{geSiSappgeneral} expressed along $\mu$ with $\mu = \frac{e^{\theta}}{1  + e^{\theta}}$. For Bernoulli rewards the equation reads:
\begin{equation}\label{Sappbernoulli}
\begin{split}
\Sapp(\meanab{\maxa}, \ndrawb{\maxa}, \meanab{\mina}, \ndrawb{\mina}) &= \left( 1 - \frac{e^{- \ndrawb{\mina}\kullfct( \meanab{\mina}, \teq)}}{ \sqrt{\Nmin}\partial_2\kullfct( \meanab{\mina}, \teq) \sqrt{2 \pi \meanab{\mina}(1-\meanab{\mina})}} \right) \frac{1}{2} \ln \left( \frac{2 \pi \meanab{\maxa}(1-\meanab{\maxa})}{\ndrawb{\maxa}} \right) \\ & \hspace{4cm }+  \frac{\sqrt{\ndrawb{\mina}}\kullfct( \meanab{\mina}, \teq) e^{- \ndrawb{\mina}\kullfct( \meanab{\mina}, \teq)}}{ \partial_2\kullfct( \meanab{\mina}, \teq) \sqrt{2 \pi \meanab{\mina}(1-\meanab{\mina})}},\\
\end{split}
\end{equation}
with $\kullfct( \theta, \theta') = \theta \ln(\theta/\theta') + (1-\theta) \ln([1-\theta]/ [1-\theta'])$ and  $\sigma_i^2 = \frac{\meanab{i}(1-\meanab{i})}{\ndrawb{i}}$.

Briefly, the expected gradient is evaluated along arm $i$ with a returned reward equal to $1$ with probability $\frac{\meanab{i}(\ndrawb{i} - 1)-1}{\ndrawb{i} - 3}$ (which is the empirical mean) or equal to $0$ with probability $1- \frac{\meanab{i}(\ndrawb{i} - 1)-1}{\ndrawb{i} - 3}$. 

Of note, by adding absolute values, we seek to maximize the entropy variation rather than its direct minimization to avoid falling into an entrapment scenario (see \cref{Siincremententrapemant} for further discussion). 

\begin{algorithm}[htbp]
\caption{\algoname Algorithm for $2$ Bernoulli arm}
\label{alg:bandit_algorithmber}
Draw each arm once; observe reward $\Rewardt(t)$ and update statistics
$\meanab{t} \leftarrow \frac{\Rewardt(t) + 1}{3}, \ndrawb{t} \leftarrow 4\, \, \forall t \in \{1,2\}$

\For{$t=3$ \KwTo $T$}{
    \tcc{Arm selection}
    $\armmax \gets \argmax_{k=1,2} \meanab{k}$, $\armmin \gets \argmin_{k=1,2} \meanab{k}$\;
    \If{$\Nmax \leq \Nmin$}{
    $\Armt\gets\armmax$}
    \Else{
    Evaluate $\Delta = |\Delta_{\armmax} \Sapp| - |\Delta_{\armmin} \Sapp| $ following \cref{Sgradientbernoulli} \;

    \If{$\Delta > 0$}{
    $\Armt\gets\armmax$}
    \Else{
    $\Armt\gets\armmin$}
    }
    Pull $\Armt$ and observe $\Rewardt(\Armt)$\\
    \tcc{Update statistics}
    $\meanab{\Armt} \leftarrow \frac{\meanab{\Armt}(\ndrawb{\Armt}- 1) +\Rewardt }{ \ndrawb{\Armt}} $,
    $\ndrawb{\Armt} \leftarrow \ndrawb{\Armt} + 1 $
}
\end{algorithm}

We draw some additional observations on the practical implementation of the code. First, in the gradient evaluation of $\Delta_{i}$ following \cref{Sgradientbernoulli} we may find a $\teq$ value to be undefined (because $\Nmin > \Nmax$ or $\teqi > 1$), which is unusable for Bernoulli rewards. In this case, $\teqi$ is taken to be equal to $1$, resulting in $\Smax = \frac{1}{2} \ln \left( \frac{2 \pi \meanab{\maxa}(1-\meanab{\maxa})}{\ndrawb{\maxa}} \right)$. 

Second, at large times, noticing that the better empirical arm is drawn extensively, one can increment the algorithm by multiple steps at a time to speed up \algoname. 
Indeed, let us assume that the better empirical arm is drawn $T$ times successively while always returning a null reward, which is the worst scenario for the returned reward of the better empirical arm. Then, if the increment evaluation at $t+T$ of \cref{alg:bandit_algorithmber} still returns $\armmax$, then it ensures that all increment evaluations between $[t,t+T]$ of \cref{alg:bandit_algorithmber} will always return $\armmax$ independently of its returned rewards. Then, using a dichotomy search on the variable $T$, we can diminish the number of increment evaluations of \algoname at large times, thus improving \algoname['s]{} performance.

\subsection{Information maximization approximation for Bernoulli rewards with more than two arms}\label{multiimplemtationdetail}

We start by reminding the obtained entropy approximation for more than two arms:

\begin{equation}\label{Sappexpmulit}
\Sapp= - \int_{\thetaDomain}  \big( 1 - \sum_{i \neq \armmax}^{K} [1- \cumi(\val)] \big) \parmax(\val)  \ln\parmax(\val)  \df\val - \sum_{i \neq \armmax}^{K} \int_{\teqi}^{\bsup} \parmi(\val) \ln \parmi(\val)  \df\val.
\end{equation}

We first consider the increment along a worse empirical arm, which simplifies :

\begin{equation}\label{Sappexpmulit2}
|\Delta_{i} \Sapp|=  \Delta_i \bigg[  - \int_{\thetaDomain} \cumi(\val) \parmax(\val)  \ln\parmax(\val) \df\val - \int_{\teqi}^{\bsup} \parmi(\val) \ln \parmi(\val) \df\val \bigg] ,
\end{equation}

which is exactly the increment evaluated in the two-armed case given in \cref{Sappbernoulli}.

Finally, we consider the increment along the better empirical arm. For simplicity, we neglect $\teqi$ variations for the increments evaluation. By use of \cref{Sappbernoulli} we obtain

\begin{equation}\label{Sappexpmulit1}
| \Delta_{\armmax} \Smax |=  \bigg|  1 -  \sum_{i \neq \armmax}^{K} \frac{e^{- \ndrawb{\mina}\kullfct( \meanab{i}, \teq)}}{ \sqrt{\Nmin}\partial_2\kullfct( \meanab{i}, \teq) \sqrt{2 \pi \meanab{i}(1-\meanab{i})}}  \bigg|  \bigg| \Delta_{\armmax} H(\meanab{\maxa}, \ndrawb{\maxa})  \bigg|,
\end{equation}

where 

\begin{equation}\label{Sgradientbodybern}
\begin{split}
\bigg| \Delta_{\armmax} H(\meanab{i}, \ndrawb{i}) \bigg| = \bigg| &\frac{\meanab{i}(\ndrawb{i} - 1)-1}{\ndrawb{i} - 3} H(  \frac{\meanab{i}\ndrawb{i} + 1-  \meanab{i} }{\ndrawb{i}} , \ndrawb{i} + 1, \meanab{j}, \ndrawb{j})\\ &+ \frac{\ndrawb{i} - 2 -\meanab{i}(\ndrawb{i}- 1) }{\ndrawb{i}-3} H( \frac{\meanab{i}(\ndrawb{i}-1) }{ \ndrawb{i}}, \ndrawb{i} + 1, \meanab{j}, \ndrawb{j})  - H(\meanab{i},\ndrawb{i},\meanab{j}, \ndrawb{j})  \bigg|,\\
\end{split}
\end{equation}

with $H(\meanab{\maxa}, \ndrawb{\maxa}) = \frac{1}{2} \ln \left( \frac{2 \pi \meanab{\maxa}(1-\meanab{\maxa})}{\ndrawb{\maxa}} \right)$.

\begin{algorithm}[htbp]
\caption{\algoname Algorithm for $K > 2$ Bernoulli arm}
\label{alg:bandit_algorithmmulti}
Draw each arm once; observe reward $\Rewardt(t)$ and update statistics
$\meanab{t} \leftarrow \frac{\Rewardt(t) + 1}{3}, \ndrawb{t} \leftarrow 4\, \, \forall t \in \{1,..K\}$

\For{$t=K+1$ \KwTo $T$}{
    \tcc{Arm selection}
    $\armmax \gets \argmax_{k=\{1,..,K\}} \meanab{k}$;
    Evaluate $\Delta_{\armmax} \Sapp$ following \cref{Sappexpmulit1} \;
    Evaluate $ \mina = argmax (\Delta_{i} |\Sapp|, i \neq \armmax )$ with $\Delta_{i} |\Sapp|$ following \cref{Sgradientbernoulli} \;

    \If{$\Delta_{\armmax} |\Sapp| > \Delta_{\mina}|\Sapp|$}{
    $\Armt\gets\armmax$}
    \Else{
    $\Armt\gets\armmin$}
    
    Pull $\Armt$ and observe $\Rewardt(\Armt)$\\
    \tcc{Update statistics}
    $\meanab{\Armt} \leftarrow \frac{\meanab{\Armt}(\ndrawb{\Armt}-1) +\Rewardt }{ \ndrawb{\Armt}} $,
    $\ndrawb{\Armt} \leftarrow \ndrawb{\Armt} + 1 $
}
\end{algorithm}

Of note in the gradient evaluation of $\Delta_{i}$ following \cref{Sgradientbernoulli}, if ones finds a $\teqi$ value undefined (because $\Nmin > \Nmax$ or $\teqi > 1$ which is unusable for Bernoulli reward), then, $\teqi$ is taken to be equal to $1$ resulting in $\Smax = \frac{1}{2} \ln \left( \frac{2 \pi \meanab{\maxa}(1-\meanab{\maxa})}{\ndrawb{\maxa}} \right)$. Finally, if $\armmax \gets \argmax_{k=\{1,..,K\}} \meanab{k}$ has multiple solutions, we suggest choosing the one displaying the lowest number of draws.

\subsection{Overview of baseline bandit algorithms}\label{app:otheralgos}

Here, we briefly review several baseline algorithms and their chosen parameters to provide a benchmark of our information maximization method.

\subsubsection{UCB-Tuned}

This algorithm falls under the category of upper confidence bound (UCB) algorithms, which select the arm maximizing a proxy function typically defined as $F_i = \meana{i} + B_i$.
For UCB-tuned, $B_i$ is given by:
\begin{equation}\label{UCBTeq}
\begin{split}
R_i &= c(\marmi{1},\marmi{2}) \sqrt{ \frac{\ln(t)}{\ndrawt{i}} \mathrm{min} \left( \frac{1}{4}, s_i(t) \right) },\: \: s_i(t) = \hat{\sigma_i}^2 + \sqrt{ \frac{2\ln(t)}{\ndrawt{i}}},
\end{split}
\end{equation}

where $\hat{\sigma_i}^2$ is the reward variance and $c$ a hyperparameter. For Gaussian rewards, by testing various $c$ values for uniform priors in \cref{UCBTeq}, we end up with $c=2.1$ and  $\hat{\sigma_i}^2 = \frac{\sigma^2}{\ndrawt{i}}$.

\subsubsection{KL-UCB}

This algorithm is another variant of the upper confidence bound (UCB) class specifically designed for bounded rewards. In particular, it is known to be optimal for Bernoulli distributed rewards~\citep{garivier_kl-ucb_2011,cappeKullbackLeiblerUpper2013}.
For KL-UCB, $F_i$ is expressed as follows:

\begin{equation}\label{KLUCB}
\begin{split}
F_i = \mathrm{\max} \Bigl\{ &\theta \in \Theta:\ndrawt{i} \kull \left( \frac{\rewt{i}}{\ndrawt{i}}, \theta \right) \leq \ln(t) + c(\marmi{1},\marmi{2})\ln(\ln( t)) \Bigl\},
\end{split}
\end{equation}

where $\Theta$ denotes the definition interval of the posterior distribution. By testing various $c$ values for uniform priors, we end up with  $c(\marmi{1},\marmi{2}) = 0.00001$ ($c=0$ for the 50-armed Gaussian setting and $c=10^{-6}$ for the 2-armed Bernoulli setting). Of note, the maximum is found using a dichotomy method using a precision of $10^{-5}$ and a maximum number of iterations of 50.

For KL-UCB++ \citep{menardMinimaxAsymptoticallyOptimal2017}, the function $F_i$ is expressed as follows 

\begin{equation}\label{KLUCBplusplus}
\begin{split}
F_i = \mathrm{\max} \Bigl\{ &\theta \in \Theta: \ndrawt{i} \kull \left( \frac{\rewt{i}}{\ndrawt{i}}, \theta \right) \leq \ln_+ \left( \frac{T}{K \ndrawt{i}(t)} \ln_+^2 \left( \frac{T}{K  \ndrawt{i}(t)} \right) +
 1 \right)  \Bigl\},
\end{split}
\end{equation}

where $\ln_+(x) = \max(\ln(x),0)$, and $T$ is the stopping time of the bandit game. Therefore, KLUCB++ is not an anytime algorithm, but it still underperforms when compared to AIM and Thompson sampling. 


\subsubsection{Thompson sampling}

At each step, Thompson sampling~\citep{thompson1933likelihood,kaufmann2012thompson,kaufmannThompsonSamplingAsymptotically2012a} selects an arm at random, based on the posterior probability that is maximizes the expected reward. In practice, it draws $K$ random values according to each arm's mean posterior distribution and selects the arm with the highest sampled value as:
\begin{equation}\label{Thomposn}
\Armt = \underset{i=1..K}{\mathrm{argmax}} \bigg( Z_i \left(\meana{i}(t), \ndrawt{i}\right)  \bigg),
\end{equation}
where $Z_i(t)$ is drawn according to the posterior distribution of the $i$th arm's mean. Here, we used a uniform prior on $[0,1]$ for Bernoulli rewards and a uniform prior on $\mathbb{R}$ for Gaussian rewards to provide a direct comparison with \algoname[.]

Finally, for Thompson sampling plus~\citep{jinFiniteTimeRegretThompson2022}, denoted TS+, each sampled value for the comparison is drawn according to $ Z_i \left(\meana{i}(t), \ndrawt{i}\right)$ with a probability $1/K$ or taken equal to $\meana{i}(t)$, otherwise.

\subsubsection{MED}

At each step, the minimal empirical divergence (MED) algorithm~\citep{hondaAsymptoticallyOptimalPolicy2011a}, selects an arm at random, based on a tailored distribution building on the \kulleib distance to the better empirical arm. In practice, the arm $\Armt = i$ is be drawn with a probability: 
\begin{equation}
p_i = \frac{\exp \left[ - \ndrawt{i} \kull \left( \frac{\rewt{i}}{\ndrawt{i}},\meanab{\maxa} \right) \right]}{\sum_{j=0}^K \exp \left[ - \ndrawt{j} \kull \left( \frac{\rewt{j}}{\ndrawt{j}},\meanab{\maxa} \right) \right]  } .
\end{equation}

\subsection{Additional experiments}\label{app:additionalexperiments}

\subsubsection{Approximate information maximization for Gaussian rewards and close arms }\label{app:fixedmeanexpe}

For completeness, we provide in \cref{fig:closemean} below regret performances in which the arms' mean values are close ($ \Delta \mu = 0.01$), and are thus difficult to distinguish, for Gaussian reward distributions. Here, \algoname shows state-of-the-art performance comparable to Thompson sampling, even outperforming it at longer times. 

\begin{figure}[h!]
\centering
\includegraphics[scale=0.5]{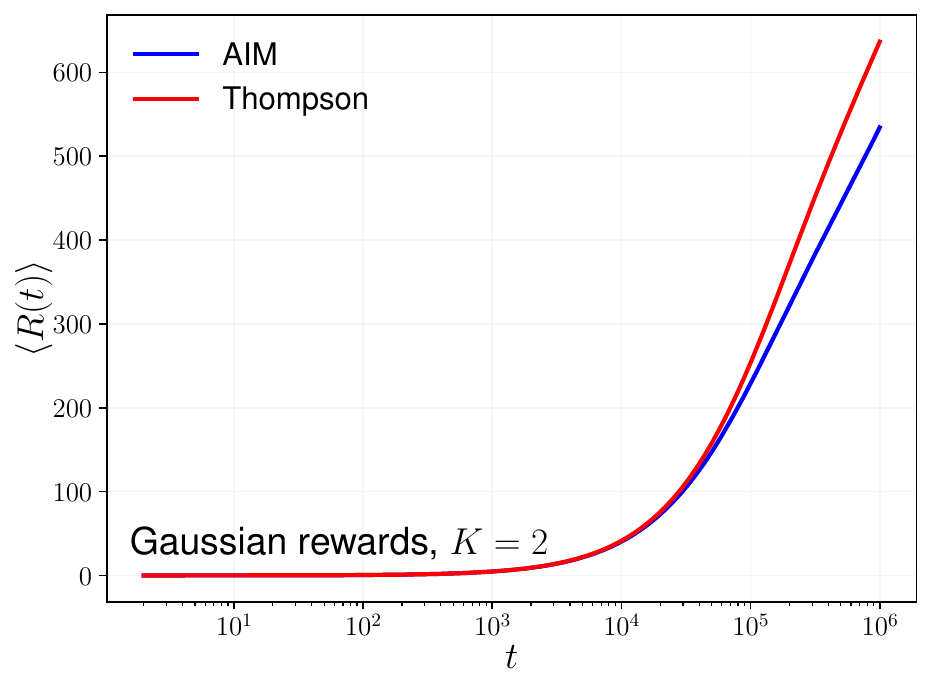}
\caption{Temporal evolution of the regret for 2-armed bandit with Gaussian rewards ($\sigma=1$) for close mean parameters. In blue \algonamese, in red Thompson sampling. 
Arm mean reward values are fixed with $\mu_1 = 0.8$ and $\mu_2 = 0.79$, the regret is obtained by averaging over $10^5$ realizations. }
\label{fig:closemean}
\end{figure}

\subsubsection{Approximate information maximization for Bernoulli rewards and close arms }\label{app:fixedmeanexpe2}

For completeness, we provide in \cref{fig:closemeanber} below regret performances in which the arms mean value are close ($ \Delta \mu = 0.01$) for Bernoulli reward distributions. As for Gaussian rewards, \algoname shows state-of-the-art performance comparable to Thompson sampling even when arms mean rewards are difficult to distinguish.

\begin{figure}[h!]
\centering
\includegraphics[scale=0.5]{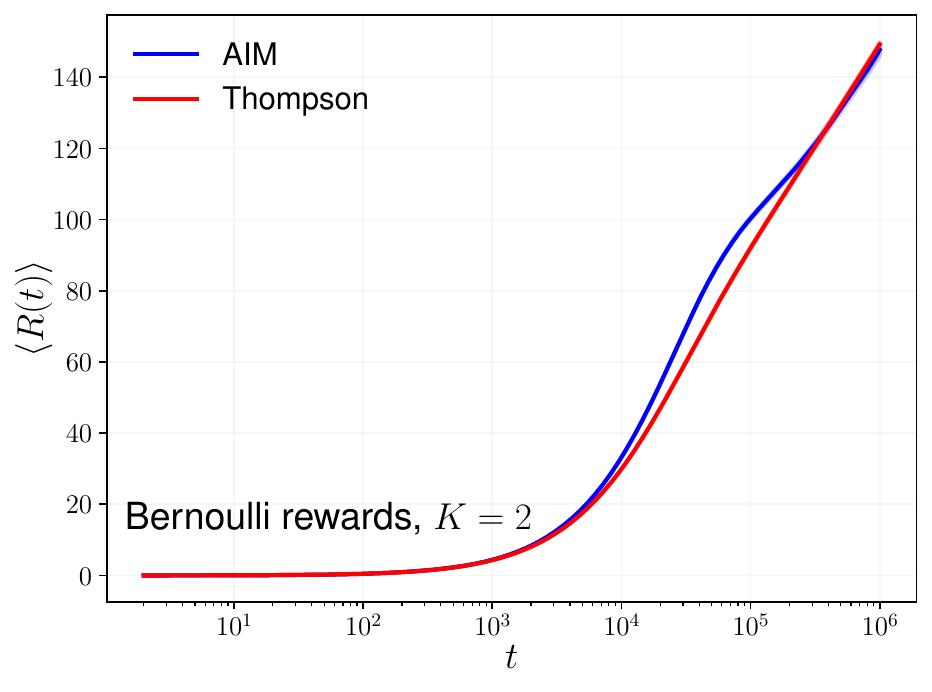}
\caption{Temporal evolution of the regret for 2-armed bandit with Bernoulli rewards for close mean parameters. In blue \algonamese, in red Thompson sampling. Arm mean reward values are fixed with $\mu_1 = 0.8$ and $\mu_2 = 0.79$, the regret is obtained by averaging over $10^5$ realizations. Confidence intervals shows the standard deviation. }
\label{fig:closemeanber}
\end{figure}

\subsubsection{Investigating approximate information maximization for large simulation volumes }\label{app:moredata}

To refine the numerical investigation of AIM's regret performance, we replicate the experiments of the main text using a larger volume of simulations (but on shorter timescales). For the 2-armed bandit with Gaussian and Bernoulli rewards, the regret performance under a uniform prior is averaged over more than $10^5$ runs. Similarly, for the 50-armed bandit with Bernoulli rewards, the regret is averaged over $4\times10^4$.
This leads to the results shown in \crefrange{fig:1moredata}{fig:3moredata}, confirming the results of \crefrange{fig:1}{fig:3}.

\begin{figure}[htbp]
\centering
\includegraphics[scale=0.5]{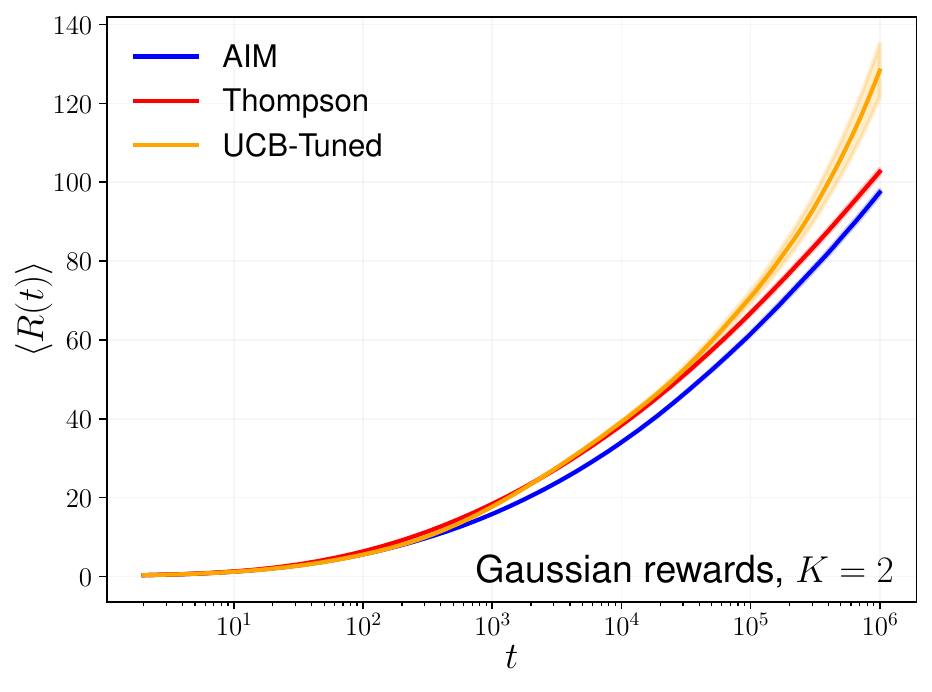}
\caption{Evolution of the Bayesian regret for 2-armed bandit with Gaussian rewards under a uniform mean prior. The regret is averaged over more than $10^5$ runs. Confidence intervals shows the standard deviation. Confidence intervals show the standard deviation.}
\label{fig:1moredata}
\end{figure}


\begin{figure}[htbp]
\centering
\includegraphics[scale=0.5]{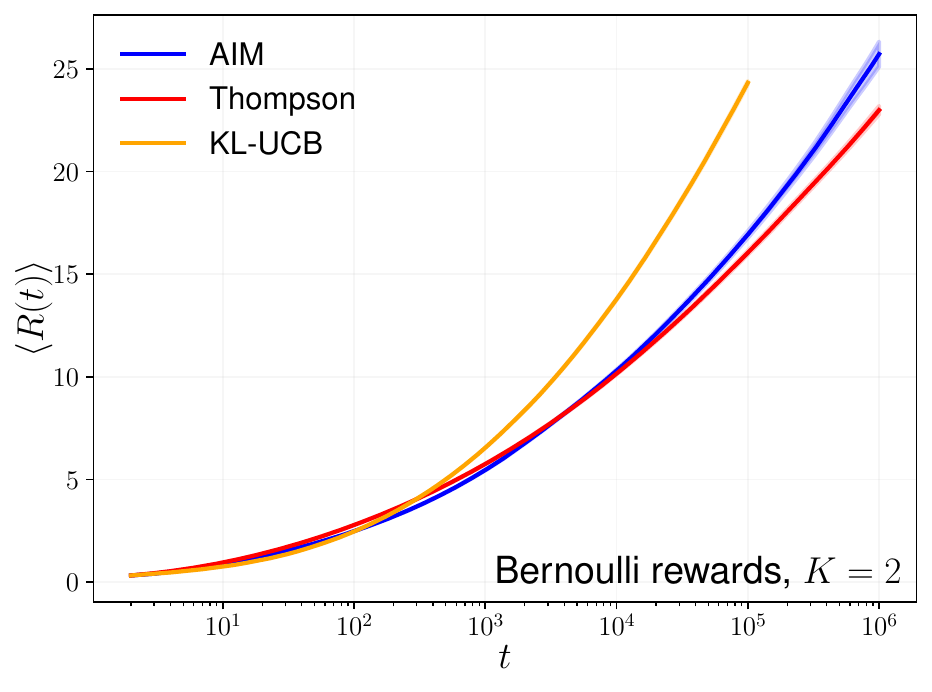}
\caption{Evolution of the Bayesian regret for 2-armed bandit with Bernoulli rewards under a uniform mean prior. The regret is averaged over more than $10^5$ runs. Confidence intervals show the standard deviation.}
\label{fig:2moredata}
\end{figure}

\begin{figure}[htbp]
\centering
\includegraphics[scale=0.5]{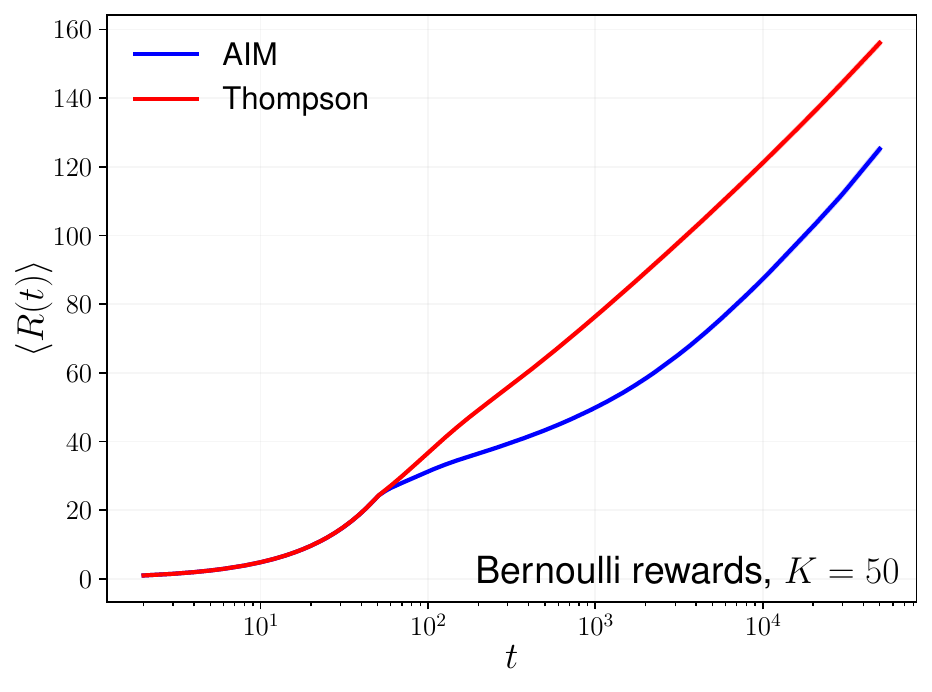}
\caption{Evolution of the Bayesian regret for 50-armed bandit with Bernoulli rewards under a uniform mean prior. The regret is averaged over $4\times10^4$ runs. Confidence intervals show the standard deviation.}
\label{fig:3moredata}
\end{figure}

\end{document}